\newtheorem{theorem}{Theorem}[section]
\newtheorem{remark}{Remark}
\definecolor{teal}{rgb}{0.0, 0.5, 0.5}
\def\,{$\mskip\thinmuskip$} \def\!{$\mskip-\thinmuskip$}
\def\BibTeX{{\rm B\kern-.05em{\sc i\kern-.025em b}\kern-.08em
    T\kern-.1667em\lower.7ex\hbox{E}\kern-.125emX}}
\begin{document}
\title{Spike Agreement Dependent Plasticity: A scalable Bio-Inspired learning paradigm for Spiking Neural Networks}

\author[1]{Saptarshi Bej}
\author[2]{Muhammed Sahad E}
\author[1]{Gouri Lakshmi}
\author[1]{Harshit Kumar}
\author[1]{Pritam Kar}
\author[2]{Bikas C Das}

\affiliation[1]{School of Data Science, Indian Institute of Science Education and Research, Thiruvananthapuram, India}
\affiliation[2]{School of Physics, Indian Institute of Science Education and Research, Thiruvananthapuram, India}

\begin{abstract}
We introduce Spike Agreement Dependent Plasticity (SADP), a biologically inspired synaptic learning rule for Spiking Neural Networks (SNNs) that relies on the agreement between pre- and post-synaptic spike trains rather than precise spike-pair timing. SADP generalizes classical Spike-Timing-Dependent Plasticity (STDP) by replacing pairwise temporal updates with population-level correlation metrics such as Cohen’s $\kappa$. The SADP update rule admits linear-time complexity and supports efficient hardware implementation via bitwise logic. Empirical results on MNIST and Fashion-MNIST show that SADP, especially when equipped with spline-based kernels derived from our experimental iontronic organic memtransistor device data, outperforms classical STDP in both accuracy and runtime. Our framework bridges the gap between biological plausibility and computational scalability, offering a viable learning mechanism for neuromorphic systems.
\end{abstract}

\maketitle

\textit{Keywords: Spiking Neural Networks, Synaptic Plasticity, Memtransistor-Based Learning, Hardware-Aware STDP}

\section{Introduction}\label{sec:introduction}
Spike-Timing-Dependent Plasticity (STDP) has long stood as a foundational model for synaptic learning, rooted in the temporally asymmetric Hebbian principle that causally related spikes reinforce synaptic strength \cite{bi1998synaptic,markram1997regulation}. When a presynaptic neuron fires shortly before a postsynaptic neuron, Long-Term Potentiation (LTP) occurs; if the order is reversed, Long-Term Depression (LTD) is induced. This pairwise mechanism is supported by extensive experimental evidence and has been successfully used to model learning in Spiking Neural Networks (SNNs) \cite{bi2001synaptic,sjostrom2001rate}. Unsupervised STDP-based learning has also proven effective in extracting visual features from natural scenes using biologically plausible SNNs~\cite{MasquelierThorpe2007,Ferre2018,Kheradpisheh2016}. These models, when combined with competitive inhibition or convolutional layers, have shown strong performance on classification benchmarks such as MNIST, CIFAR-10, and ETH-80.

Despite its biological plausibility, classical STDP faces two major limitations in computational settings. First, it relies on millisecond-scale spike timing precision, which is difficult to achieve and maintain in noisy environments such as the cortex \cite{Markram2011,Markram2012}. Second, its pairwise nature leads to quadratic computational complexity, as it requires evaluation of all spike pairings between pre- and post-synaptic neurons. These constraints hinder scalability and robustness, particularly in deep or real-time neuromorphic systems \cite{Tian2025}.

To overcome these limitations, recent research has shifted toward incorporating higher-order and population-level features into synaptic plasticity rules. Notably, empirical findings demonstrate that coordinated spiking activity—or neuronal synchrony—plays a critical role in effective synaptic modification, even in the absence of direct spike-pair causality \cite{Tian2025,Subthreshold2025}. This has led to the development of Spike-Synchrony-Dependent Plasticity (SSDP), which enhances STDP by modulating weight updates based on the degree of synchrony among neuron groups \cite{Tian2025}. SSDP improves learning stability and better reflects biological plasticity by capturing temporally proximate activity within spiking ensembles.

Building upon these advances, we propose \textbf{Spike Agreement-Dependent Plasticity (SADP)}, a novel learning rule that reinterprets synaptic plasticity as a function of spike train agreement rather than individual spike order. Unlike classical STDP, SADP does not require strict causality. Instead, it quantifies the alignment between pre- and post-synaptic spike trains over extended windows using statistical metrics such as Cohen’s $\kappa$, offering a causal-agnostic but correlation-sensitive framework. 

SADP further reduces computational burden by eliminating pairwise spike comparisons, resulting in linear-time updates that are well suited for hardware implementation. From a biological standpoint, SADP aligns with emerging views of plasticity as a multi-factorial process, influenced not only by spike timing but also by membrane dynamics, neuromodulators, and population activity patterns \cite{foncelle2018Modulation,clopath2010connectivity,pfister2006triplets}. Its compatibility with diverse plasticity profiles makes it a compelling candidate for scalable, interpretable, and efficient learning in neuromorphic systems.

SADP represents a conceptual and practical evolution beyond STDP and SSDP: it unifies spike-based and population-aware learning in a way that is both biologically grounded and computationally viable, positioning it as a foundation for the next generation of SNNs operating in real-world environments.

Our primary contribution is conceptual: we reframe synaptic learning as a function of spike train alignment, enabling robust, biologically plausible learning that is well-suited for noisy, population-driven neural dynamics. Unlike prior work that couples novel learning rules with complex network architectures or task-specific modifications, our approach is architecture-agnostic and compatible with standard spiking neuron models. Thus, this work contributes both a new perspective on spike-driven learning and a mathematically rigorous foundation for its stability and applicability across neuromorphic and biological contexts.
In Section \ref{sec:related} we detail a bit more on the history of STDP and recent directions of research towards Spike-Synchrony-Dependent Plasticity (SSDP). Thereafter, we detail on our novel approach SADP in Section \ref{sec:SADP}.

\paragraph{\textbf{Contributions}} The key contributions of this work are summarized below:

\begin{enumerate}[topsep=0pt,itemsep=0pt,parsep=0pt]
    \item We propose a novel unsupervised learning rule SADP, that generalizes classical STDP by modulating weight updates based on the statistical agreement between pre- and post-synaptic spike trains.
    
    \item We introduce a flexible kernel-based formulation for SADP, allowing device-specific weight update curves to be implemented efficiently via spline interpolation, making the rule highly compatible with emerging memtransistor-based neuromorphic hardware.
    
    \item We empirically validate SADP on unsupervised image classification tasks (MNIST and Fashion-MNIST), demonstrating improved performance and robustness over classical STDP across multiple encoding schemes and network configurations.
\end{enumerate}

\section{Related research} \label{sec:related}

\subsection{Background of Hebbian Learning}
\label{sec:hebbian_comparison}

The Hebbian learning rule, often summarized as "cells that fire together wire together" \cite{hebb1949organization}, updates synaptic weights based on the coincident activity of pre- and post-synaptic neurons. In its classical form, Hebbian plasticity updates the synaptic weight $w_{ij}$ as:
\[
\Delta w_{ij} = \eta \cdot x_i \cdot y_j,
\]
where $x_i$ and $y_j$ represent the spike activities (or rates) of the pre- and post-synaptic neurons, respectively, and $\eta$ is a learning rate. When applied to temporally extended spike trains, this corresponds to a dot product of binary activity vectors over time:
\begin{equation}\label{hebbain_learning}
    \Delta w_{ij} = \eta \sum_{t=1}^T x_i(t) y_j(t)
\end{equation}

which captures raw co-activation but not the precise timing or structure of spike trains.

Several extensions of Hebbian learning have been proposed, including:
\begin{enumerate}[topsep=0pt,itemsep=0pt,parsep=0pt]
    \item \textbf{Oja’s rule} \cite{oja1982simplified}: A normalized variant to prevent runaway growth of weights.
    \item \textbf{BCM theory} \cite{bienenstock1982theory}: Introduces a sliding threshold for long-term potentiation (LTP) vs depression.
    \item \textbf{Rate-based Hebbian rules} \cite{gerstner2014neuronal}: Widely used in continuous neural models but often lacking biological spike timing fidelity.
\end{enumerate}

Despite their simplicity and efficiency (linear-time updates, no spike-pair matching), Hebbian rules have several limitations in the context of neuromorphic and event-driven systems. However, most variants of Hebbian learning are designed for digital rate-based models and lack compatibility with device-level synaptic dynamics.

\subsection{Background of STDP}

Spike-Timing-Dependent Plasticity (STDP) represents a fundamental mechanism of synaptic modification in the brain, characterized as a temporally asymmetric form of Hebbian learning. This process precisely adjusts the strength of connections between neurons based on the relative timing of their electrical impulses, known as action potentials or spikes~\cite{bi1998synaptic,markram1997regulation}. This intricate timing-dependent modulation is widely regarded as a cornerstone for how the brain acquires and retains information, as well as how its complex neuronal circuits undergo development and refinement throughout an organism's lifespan~\cite{bi2001synaptic}.

In the canonical formulation, the update kernel depends on the spike timing difference $\Delta t = t_{\text{post}} - t_{\text{pre}}$:

\begin{equation}\label{stdp_learning} 
K_{\text{STDP}}(\Delta t) =
\begin{cases}
A_{+} e^{-\Delta t / \tau_{+}}, & \Delta t > 0, \\
- A_{-} e^{\Delta t / \tau_{-}}, & \Delta t < 0,
\end{cases}
\end{equation}

where $A_{+}, A_{-} > 0$ are the maximal potentiation and depression amplitudes, and $\tau_{+}, \tau_{-}$ are decay time constants. Small positive $\Delta t$ (presynaptic before postsynaptic) is interpreted as causal, producing strong potentiation; small negative $\Delta t$ (postsynaptic before presynaptic) is interpreted as anti-causal, producing strong depression. This yields an asymmetric, causality-driven plasticity window.

The core principle of STDP dictates that if a presynaptic neuron fires its spike a few milliseconds before a postsynaptic neuron generates its own action potential, the synaptic connection between them typically undergoes Long-Term Potentiation (LTP), signifying a strengthening of that connection. Conversely, if the presynaptic spike arrives after the postsynaptic spike, the same synapse experiences Long-Term Depression (LTD), leading to its weakening~\cite{bi1998synaptic}. This specific temporal window, often termed the STDP function or learning window, is not universal but can exhibit variations across different types of synapses, reflecting the diverse computational roles of various brain regions. The precise detection of these millisecond-scale spike timings is often mediated by NMDA receptors, which function as critical coincidence detectors within the synaptic machinery~\cite{sjostrom2001rate}.

In computational models, STDP has been employed for unsupervised learning of visual and auditory patterns. For instance, Masquelier and Thorpe~\cite{MasquelierThorpe2007} demonstrated that spike-timing-based plasticity could be used to extract visual features from natural image sequences. Extensions of these methods to convolutional spiking networks~\cite{Kheradpisheh2016,Kheradpisheh2018} and auditory pattern recognition~\cite{Dong2018} have further validated STDP’s potential in pattern recognition domains.

The appeal of STDP in computational neuroscience stems from its strong biological grounding and its ability to enable synaptic learning at low firing rates, by exploiting precise temporal correlations in spike timing rather than relying on high activity levels~\cite{song2000competitive,bi2001synaptic}. STDP is frequently described as ``temporally causal'' and ``temporally asymmetric'' because it assigns synaptic changes based on the temporal order of pre- and postsynaptic spikes~\cite{Markram2012}. This causal framing extends the classic Hebbian postulate by emphasizing the directionality of influence, suggesting that biological synaptic plasticity mechanisms may encode not just co-occurrence but also the temporal precedence of neural events. For SNNs, this insight supports the design of learning algorithms that are more interpretable and biologically faithful by integrating temporally causal learning rules.

The experimental foundations of STDP were laid by Markram and Sakmann~\cite{markram1997regulation} and Bi and Poo~\cite{bi1998synaptic,bi2001synaptic}, who demonstrated the temporally asymmetric and bidirectional nature of synaptic plasticity within temporal windows of 10 to 100 milliseconds. Early theoretical models focused on firing-rate-based plasticity~\cite{song2000competitive}, but later shifted toward temporal spike correlations, prompting the development of recurrent neural models using causal STDP learning rules~\cite{yang2025causal}. These models have been effective at supporting the emergence of stable neuronal assemblies and increasing information capacity.

Recent literature acknowledges that connectivity among neurons is not solely governed by spike timing. Additional biophysical factors, such as postsynaptic membrane potential, calcium concentration, and network-wide activity, significantly modulate plasticity. Studies have highlighted the limitations of canonical STDP and emphasized the need for multifactor plasticity models~\cite{pfister2006triplets,clopath2010connectivity,foncelle2018Modulation}. These models present a more biologically accurate picture of learning and enable the design of SNNs that better reflect biological computation.

\subsection{Limitations of Classical STDP in Computational Models}

Despite its biological appeal, classical STDP faces critical limitations in computational applications, particularly in large-scale, noisy environments. STDP relies heavily on precise spike timing within millisecond windows to differentiate between long-term potentiation (LTP) and long-term depression (LTD) \cite{Markram2012}. However, the experimental spike timing curves often derive from noisy biological data, making computational implementations of STDP vulnerable to fluctuations in spike timings and environmental noise \cite{Markram2011}. In dynamic systems, such precision demands can lead to instability and poor generalization, especially in high-frequency or stochastic spiking environments.

The event-driven and pairwise nature of STDP imposes a high computational cost. Tracking precise spike timings and updating weights asynchronously across millions of synapses is significantly more intensive than conventional gradient-based methods. Furthermore, STDP’s local, pairwise updates neglect higher-order neural interactions, which limits its ability to model the ensemble dynamics essential for cognitive functions \cite{Tian2025}. Integrating STDP with backpropagation remains difficult due to mismatched update directions, leading to subpar classification performance in deep spiking networks.

\subsection{Influence of Neural Synchrony and Collective Dynamics on Plasticity}

Classical STDP models based solely on dyadic spike interactions face increasing scrutiny in light of growing biological evidence highlighting the importance of neuronal synchrony---the coordinated firing of groups of neurons---as a key driver of synaptic plasticity and robust learning \cite{Markram2011,Tian2025}. Synchrony serves as a fundamental coding mechanism that enhances the reliability of neural communication and supports long-range feature binding across circuits \cite{Tian2025}.

Experimental studies have shown that synchronously firing populations can induce long-term potentiation (LTP) even in the absence of pairwise spike causality, challenging the traditional STDP framework \cite{Markram2011}. Synchronization is also linked to increased plasticity thresholds and improved attractor dynamics, contributing to more stable learning processes \cite{Tian2025}. Moreover, the level of synchrony in a population is tightly coupled to its mean firing rate, suggesting that rate coding and synchrony coding are interdependent perspectives on the same underlying network dynamic \cite{Chawla1999}.

This challenges the prevalent asynchronous state hypothesis, favoring instead a view where weak but consistent synchrony plays a central role in shaping cortical variability and computation \cite{Subthreshold2025}. As such, the limitations of purely pairwise STDP motivate a shift toward more global models of plasticity.

Beyond pairwise spike timing, STDP-based mechanisms have also been shown to support probabilistic inference in cortical microcircuits~\cite{Nessler2013,Habenschuss2013}. These works demonstrate that population codes and WTA dynamics can emerge naturally in networks trained with STDP alone, broadening its relevance beyond low-level plasticity.

\subsection{Emergence of Spike-Synchrony-Dependent Plasticity (SSDP)}

To incorporate these insights, Spike-Synchrony-Dependent Plasticity (SSDP) extends the STDP rule by integrating not only spike timing but also the degree of synchrony among neuron groups as a modulatory signal for weight updates \cite{Tian2025}. This enables more scalable and noise-robust learning by capturing temporally proximate spikes rather than requiring strictly causal interactions.

SSDP has been shown to improve learning stability and classification performance across various architectures, including spiking ResNets and transformer-based SNNs, all while operating in a fully event-driven and energy-efficient manner \cite{Tian2025}. The emerging consensus that rate and synchrony coding are complementary rather than mutually exclusive \cite{Chawla1999} provides a strong theoretical foundation for SSDP as a unifying mechanism for biologically plausible and hardware-friendly learning in SNNs.

\begin{remark}
The limitations of classical STDP—its strict dependence on precise spike timing, high computational cost due to pairwise comparisons, and vulnerability to noise—combined with the growing evidence for population-level synchrony as a driver of plasticity, motivate the need for more global, noise-tolerant, and hardware-efficient learning rules. While SSDP partially addresses this by incorporating synchrony into STDP, it still retains causal timing dependencies and pairwise update complexity. The proposed SADP learning paradigm described in the next Section bridges this gap by replacing millisecond-scale timing constraints with a global spike-train agreement metric (Cohen's $\kappa$), thereby unifying temporal sensitivity, statistical robustness, and linear-time complexity. This design makes SADP inherently suited for low-latency, device-calibrated, and scalable neuromorphic learning.
\end{remark}

\section{Spike Agreement-Dependent Plasticity (SADP)} \label{sec:SADP}

We shall now describe the novel SADP learning paradigm on a simple SNN built on leaky integrate-and-fire (LIF) dynamics. The network processes inputs encoded as coded (rate coding or other alternatives such as burst coding) spike trains over a fixed number of discrete time steps $T$. Each input sample is represented as a 3D tensor $\mathbf{X} \in \mathbf{R}^{B \times N_{\text{in}} \times T},$ where $B$ is the batch size, $N_{\text{in}}$ is the number of input neurons (e.g., 784 for flattened MNIST), $T$ is the number of discrete time steps (e.g., 10 or 30, depending on encoding resolution). The network consists of one or more fully connected spiking layers. Each layer integrates the spiking input current over time and emits binary spike trains as output. The membrane potential of each neuron in the layer follows leaky integration, reset, and thresholding dynamics. The output of the network is a spatio-temporal spike train tensor $\mathbf{S} \in \mathbf{R}^{B \times N_{\text{out}} \times T},$ where $N_{\text{out}}$ is the number of output neurons. These output spike trains can be aggregated over time to infer features learnt using SADP and can be used by downstream workflows for further computation. 

Each layer is parameterized by a trainable synaptic weight matrix $\mathbf{W} \in \mathbf{R}^{N_{\text{in}} \times N_{\text{out}}},$ where $w_{ij} \in \mathbf{W}$ denotes the strength of the synapse from pre-synaptic neuron $i$ to post-synaptic neuron $j$. This matrix transforms pre-synaptic spike input at each time step into post-synaptic current. Synaptic weights are initialized from a Rademacher distribution $w_{ij} \sim \text{Uniform}\{-1, +1\},$ which promotes early activity symmetry and propagation. Training is conducted to learn $\mathbf{w}$ in an unsupervised manner using the SADP rule, where synaptic updates are driven by spike train agreement rather than precise timing.

\subsection{Neuron Dynamics in SADP-adapted SNN}

The SNN integrates LIF dynamics with the proposed SADP rule. Each output neuron's membrane potential evolves over discrete time steps $t = 1, \dots, T$ based on a decay factor $\lambda \in (0, 1)$ and the incoming synaptic current:

\begin{equation}
\mathbf{V}_t = \lambda \mathbf{V}_{t-1} + \mathbf{I}_t, \quad \text{where} \quad \mathbf{I}_t = \mathbf{X}_t \mathbf{W}.
\end{equation}

Here, $\mathbf{X}_t \in \mathbf{R}^{B \times N_{\text{in}}}$ denotes the input spikes at time $t$ for a batch of $B$ samples and $N_{\text{in}}$ input neurons. Also, $\mathbf{V}_{t-1} \in \mathbf{R}^{B \times N_{\text{out}}}$. The weight matrix $\mathbf{W} \in \mathbf{R}^{N_{\text{in}} \times N_{\text{out}}}$ projects inputs to $N_{\text{out}}$ output neurons.

To ensure comparability of membrane potentials across neurons and time steps, the potentials are normalized at each time step:

\begin{equation}
\tilde{\mathbf{V}}_t = \frac{\mathbf{V}_t - \min(\mathbf{V}_t)}{\max(\mathbf{V}_t) - \min(\mathbf{V}_t) + \epsilon},
\end{equation}

where $\epsilon > 0$ is a small constant to avoid division by zero.

Spikes are emitted using a hard thresholding mechanism:

\begin{equation}
\mathbf{S}_t = H(\tilde{\mathbf{V}}_t - \theta),
\end{equation}

where $H(\cdot)$ is the Heaviside step function applied element-wise, and $\theta \in (0, 1)$ is the spiking threshold. When a spike is generated, the membrane potential is reset to prevent immediate reactivation:

\begin{equation}
\mathbf{V}_t \leftarrow \mathbf{V}_t \cdot (1 - \mathbf{S}_t).
\end{equation}

Across all time steps, the full spike train for the batch is recorded as:

\begin{equation}
\mathbf{S} = \{ \mathbf{S}_t \}_{t=1}^T \in \mathbf{R}^{B \times N_{\text{out}} \times T}.
\end{equation}

\subsection{SADP learning rule}

Unlike classical STDP, which relies on pairwise spike timing differences, SADP defines plasticity based on spike-train agreement using Cohen’s $\kappa$ coefficient. For each input sample in a batch, let the pre-synaptic spike train over $T$ steps from neuron $i$ be denoted as $\mathbf{x}_{b,i,:} \in \{0,1\}^T$, and the corresponding post-synaptic output from neuron $j$ as $\mathbf{S}_{b,j,:} \in \{0,1\}^T$, where $b \in \{1, \dots, B\}$ indexes the batch.

We define the SADP agreement score for each synapse $(i,j)$ and sample $b$ as:
\begin{equation}
\kappa_{ij}^{(b)} = \frac{p_0^{(b)} - p_e^{(b)}}{\max(1 - p_e^{(b)}, \epsilon)},
\end{equation}
where $\epsilon > 0$ ensures numerical stability, and
\begin{align}
p_0^{(b)} &= \frac{1}{T} \sum_{t=1}^T \mathbf{1} \left( \mathbf{X}_{b,i,t} = \mathbf{S}_{b,j,t} \right), \\
p_e^{(b)} &= \left( \frac{1}{T} \sum_{t=1}^T \mathbf{X}_{b,i,t} \right) \left( \frac{1}{T} \sum_{t=1}^T \mathbf{S}_{b,j,t} \right) + \left( \frac{1}{T} \sum_{t=1}^T (1 - \mathbf{X}_{b,i,t}) \right) \left( \frac{1}{T} \sum_{t=1}^T (1 - \mathbf{S}_{b,j,t}) \right).
\end{align}

The update signal is computed by applying a bounded learning function $\mathcal{L}: [-1,1] \to \mathbb{R}$ to each $\kappa_{ij}^{(b)}$, and then averaged over the batch:
\begin{equation}
\Delta w_{ij}^{(t)} = \frac{\eta_t}{B} \sum_{b=1}^B \mathcal{L} \left( \kappa_{ij}^{(b)} \right).
\end{equation}

The synaptic weights are updated as:
\begin{equation}
w_{ij}^{(t+1)} = \operatorname{clip} \left( \operatorname{sign}\left(w_{ij}^{(t)} + \Delta w_{ij}^{(t)}\right) \cdot \max \left( \left|w_{ij}^{(t)} + \Delta w_{ij}^{(t)} \right|, \epsilon \right), -1, 1 \right),
\end{equation}
where $\operatorname{clip}(x, -1, 1)$ ensures the weights remain within bounds, and the $\epsilon$-floor prevents synaptic silencing.

\begin{figure}[ht]
    \centering
    \includegraphics[width=0.95\textwidth]{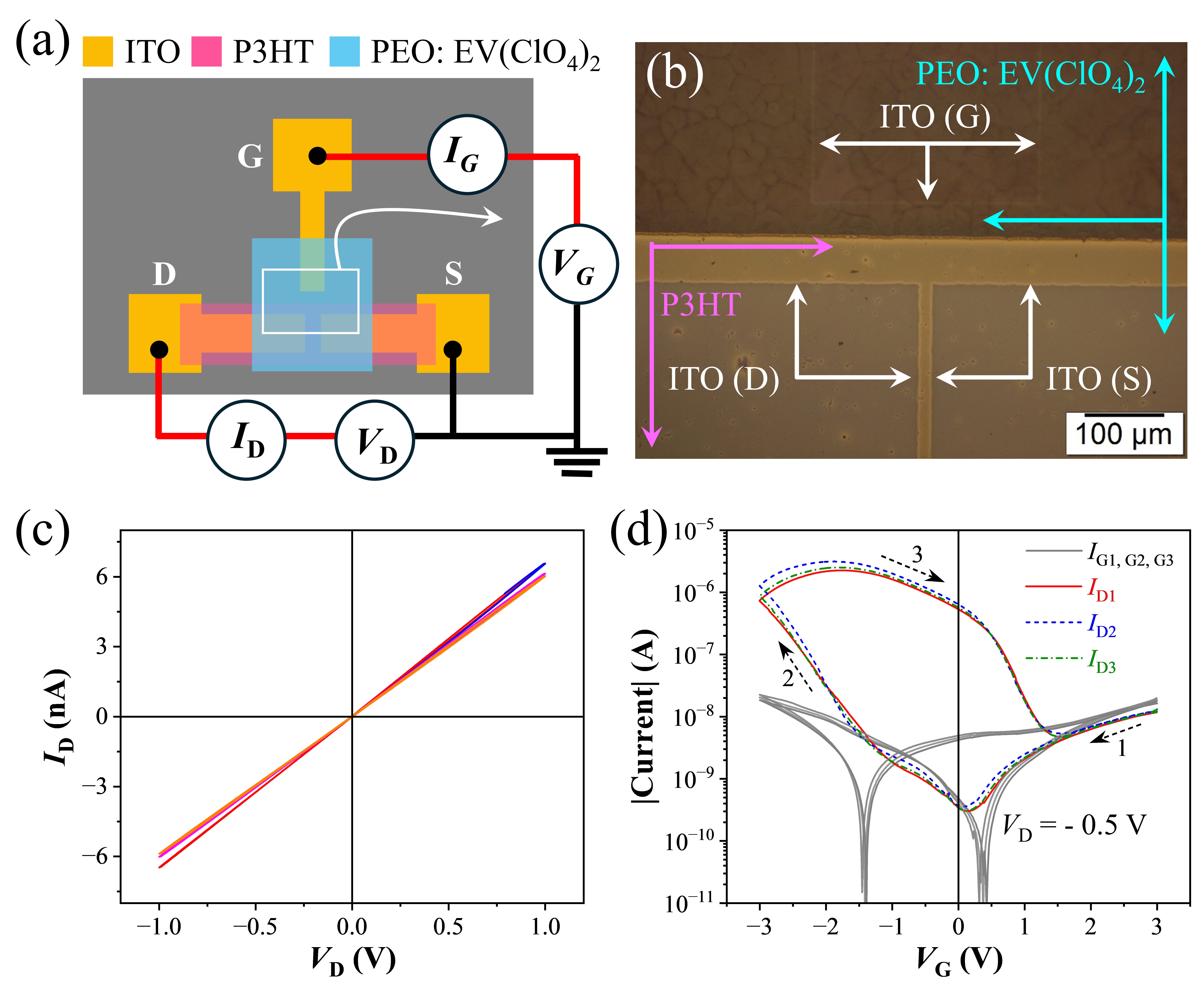}
    \caption{
        Device fabrication and electrical characterizations. 
        (a) Schematic representation of the side-gated iontronic organic memtransistor with electrical connections. 
        (b) Optical microscope image over the channel as marked by a white lined rectangle in (a) depicting transparent ITO electrodes (source, drain, and gate), P3HT channel, and solid redox electrolyte covered entire area, which are labeled accordingly. 
        (c) Current-voltage ($I_D$-$V_D$) characteristic recorded between the S-D with the photo-lithographically patterned P3HT channel only. 
        (d) Transfer ($I_D$-$V_G$) characteristics for three consecutive gate voltage sweeps in loop between $\pm3.0$ V at an applied $V_D = ‒0.5$ V. It is overlaid with the simultaneously recorded gate currents ($I_G$-$V_G$). The arrows indicate the $V_G$ sweep directions.
    }
    \label{fig:oect_characterization}
\end{figure}

\begin{remark}[From STDP to SADP: Device-Grounded Learning]
Spike-based plasticity in neuromorphic systems is fundamentally shaped by the physical characteristics of their synaptic elements, such as iontronic memtransistor or phase-change devices. These components often exhibit nonlinear and asymmetric conductance changes in response to sequences of potentiation and depression pulses, and these device-specific responses may deviate significantly from the smooth, exponential curves of idealized biological STDP.

\paragraph{\textbf{SADP reinterpretation}}
Spike-Agreement-Dependent Plasticity (SADP) replaces the dependence on spike-pair causality with a dependence on \emph{global spike-train agreement}, quantified by an agreement index $\kappa_{ij} \in [-1,1]$. This value is mapped to a synthetic temporal coordinate $\delta \in [-1,1]$, where:
\[
\delta_{\mathrm{pot}} \in (0,1] \quad\text{represents high spike-train agreement,}
\]
\[
\delta_{\mathrm{dep}} \in [-1,0) \quad\text{represents strong anti-correlation.}
\]
Potentiation is strongest for large positive $\delta$; depression is strongest for large negative $\delta$. Conceptually, the positive and negative halves of the STDP window are swapped to align with agreement rather than causality, effectively mirroring each half of the STDP kernel across $\delta = 0$. 

\paragraph{\textbf{Device-based kernel construction in SADP}}
When experimental device data are available, we record the conductance $G(t)$ over sequences of potentiation and depression pulses, and compute the normalized update:
\[
\frac{\Delta G}{G_0}(t) = \frac{G(t+1) - G(t)}{G(t) + \varepsilon},
\]
where $\varepsilon > 0$ ensures numerical stability. These updates are assigned to $\delta_{\mathrm{pot}}$ or $\delta_{\mathrm{dep}}$ according to the intended $\kappa$ mapping. We then fit \emph{smoothing splines} to the sorted data:
\[
f_{+}(\delta) = \mathrm{UnivariateSpline}(\delta_{\mathrm{pot}}, \Delta g_{\mathrm{pot}}, s=0.1),
\]
\[
f_{-}(\delta) = \mathrm{UnivariateSpline}(\delta_{\mathrm{dep}}, \Delta g_{\mathrm{dep}}, s=0.01),
\]
yielding continuous, differentiable potentiation and depression kernels. The smoothing parameter $s$ controls the trade-off between fidelity to measured data and smoothness of the fit.

\paragraph{\textbf{Synthetic kernel construction in SADP (no device data)}}
If no device measurements are available, one can start from a fitted parametric STDP kernel $K_{\mathrm{STDP}}(\Delta t)$ and generate the SADP kernel by exchanging the positive and negative $\Delta t$ segments:
\[
K_{\mathrm{SADP}}(\delta) =
\begin{cases}
K_{\mathrm{STDP}}(\delta - 1), & \delta > 0, \\
K_{\mathrm{STDP}}(\delta + 1), & \delta < 0.
\end{cases}
\]
where $K_\mathrm{STDP}$ follows from Equation \ref{stdp_learning}. This preserves the magnitude profile while swapping the causal and anti-causal roles, aligning the update rule with global spike-train agreement. As illustrated in Fig.~\ref{fig:stdp_kernel_ideal} and Fig.~\ref{fig:sadp_kernel_ideal}, 
the ideal reference kernels provide smooth, symmetric potentiation and depression 
branches that serve as analytical baselines for SADP learning.
\end{remark}

\subsection*{Neuromorphic Iontronic Memtransistor Platform}
The SADP learning framework was experimentally realized and systematically calibrated using the recorded data from a side-gated iontronic organic memtransistor, fabricated with high robustness through a conventional photolithography process. The device architecture incorporates lithographically patterned ITO source (S), drain (D), and gate (G) electrodes, bridged by a P3HT polymer channel electrically isolated from the gate. A drop-casted PEO:EV(ClO$_4$)$_2$ solid redox-electrolyte layer was deposited as the gate dielectric, as illustrated in the schematic of Figure \ref{fig:oect_characterization}a. Complementarily, Figure \ref{fig:oect_characterization}b presents an optical micrograph of the active channel region, clearly delineating all structural counterparts with high precision. Detailed fabrication procedures and comprehensive electrical characterizations are provided in Appendix B.

This architecture intrinsically offers low leakage current and significant channel conductance tunability, primarily governed by reversible redox interactions and the efficient migration of counter-ions across the channel–electrolyte interface under applied gate bias polarity \cite{sagar2022emulation}. Figure \ref{fig:oect_characterization}(c) displays the hysteresis-free current–voltage ($I_D$-$V_D$) characteristics of the patterned P3HT channel (prior to electrolyte deposition), confirming its robust semiconducting nature. Following electrolyte integration, facile and reproducible memtransistor behavior was consistently observed, as reflected in the transfer characteristics shown in Figure \ref{fig:oect_characterization}f. The pronounced hysteresis in drain current ($I_D$) across opposite bias sweeps is emblematic of enhancement-mode p-channel memtransistor behavior \cite{lee2013effective, mukherjee2021superionic}. Furthermore, the significantly suppressed gate current recorded in Figure \ref{fig:oect_characterization}ad originates from the redox coupling between DPP-DTT and EV(ClO$_4$)$_2$ under gate bias \cite{sagar2019unconventional}.

Organic iontronic memtransistors have already emerged as strong contenders for next-generation neuromorphic hardware. Particularly, the synaptic behavior was demonstrated through linear and symmetric potentiation/depression (P/D) responses to the input voltage pulse trains, enabling direct mapping of spline-calibrated SADP kernels to device conductance updates \cite{vandeBurgt2018organic}. The P/D characteristics, recorded as post-synaptic responses under gate-applied voltage pulse sequences (Figure \ref{fig:memtransistor_data} (left)), are illustrated for our side-gated iontronic memtransistor in Figure \ref{fig:memtransistor_data} (right). These measured P/D curves were subsequently employed to derive smooth, bounded spline kernels that parameterize the SADP learning rule. This device-calibrated framework ensures that simulated synaptic updates closely reproduce the experimentally observed conductance dynamics, thereby enabling a physically grounded evaluation of SADP within neuromorphic hardware contexts. Figures \ref{fig:stdp_kernel} and \ref{fig:sadp_kernel} further illustrate a device-specific STDP kernel extracted from the experimental P/D curve and the corresponding SADP kernel utilized in this study.

\begin{figure}[htbp!]
    \centering
    
    \subfigure[Synaptic Functionality and ANN simulations. (left) The pre-synaptic input pulse scheme at the gate for 1000 consecutive excitatory write (W) (‒ 3.0 V, 50 ms) and inhibitory (E) (+ 1.0 V, 50 ms) voltage pulses. It include a read voltage pulse of ‒ 0.5 V, 50 ms after each W/E pulses. (right) Recorded post-synaptic P/D response curve for the input pulse trains in the diagram to the (left).]{
        \includegraphics[width=0.8\textwidth]{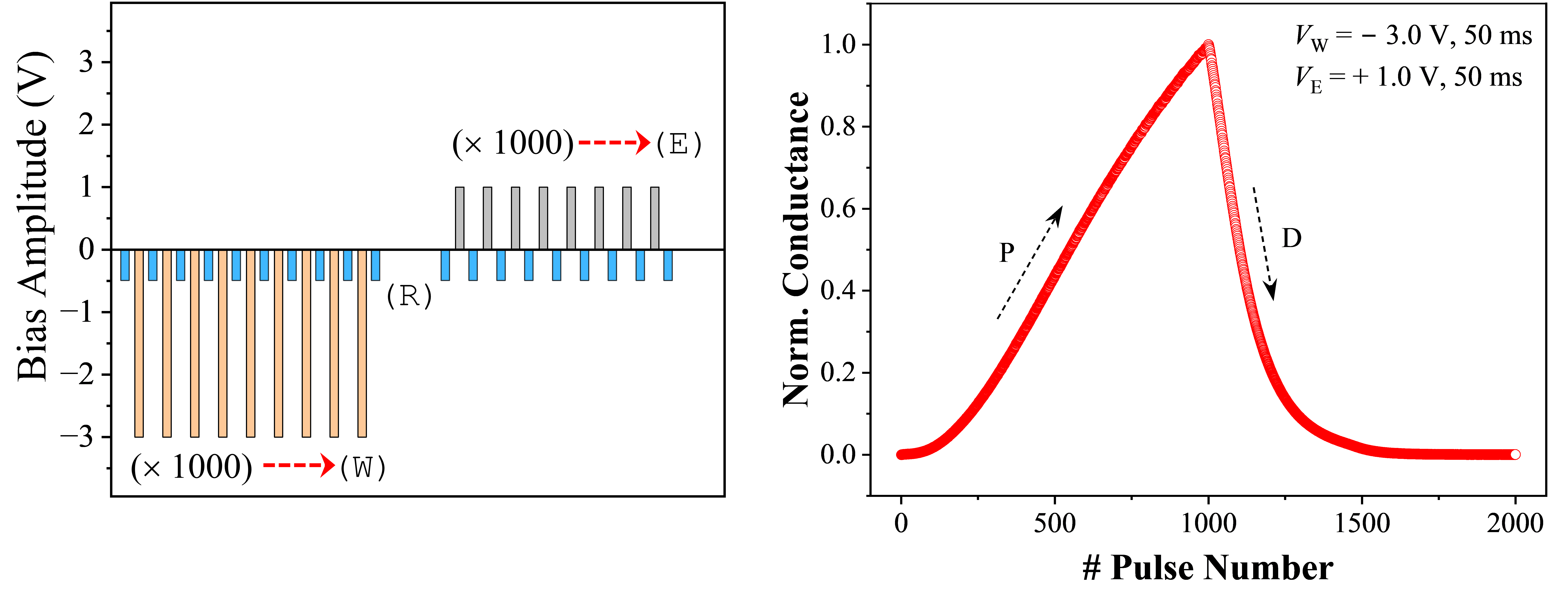}
        \label{fig:memtransistor_data}
    }
    
    \vspace{0.2cm} 

    \subfigure[Memtransistor-based STDP spline kernel showing potentiation and depression data with smoothing spline fits.]{
        \includegraphics[width=0.48\textwidth]{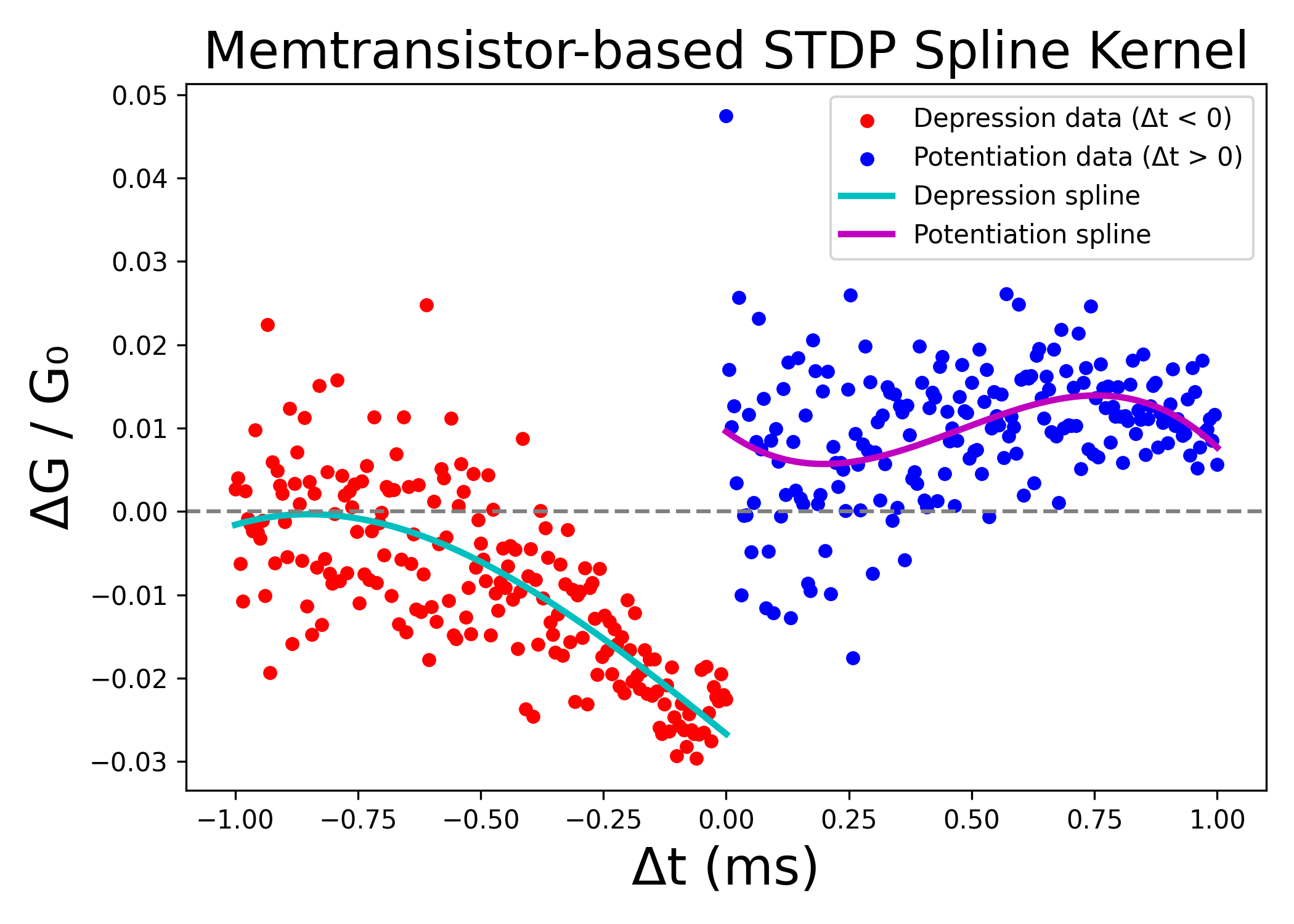}
        \label{fig:stdp_kernel}
    }
    \hfill
    \subfigure[Memtransistor-based SADP spline kernel obtained by remapping the STDP kernel to agreement coordinates, with potentiation and depression spline fits.]{
        \includegraphics[width=0.48\textwidth]{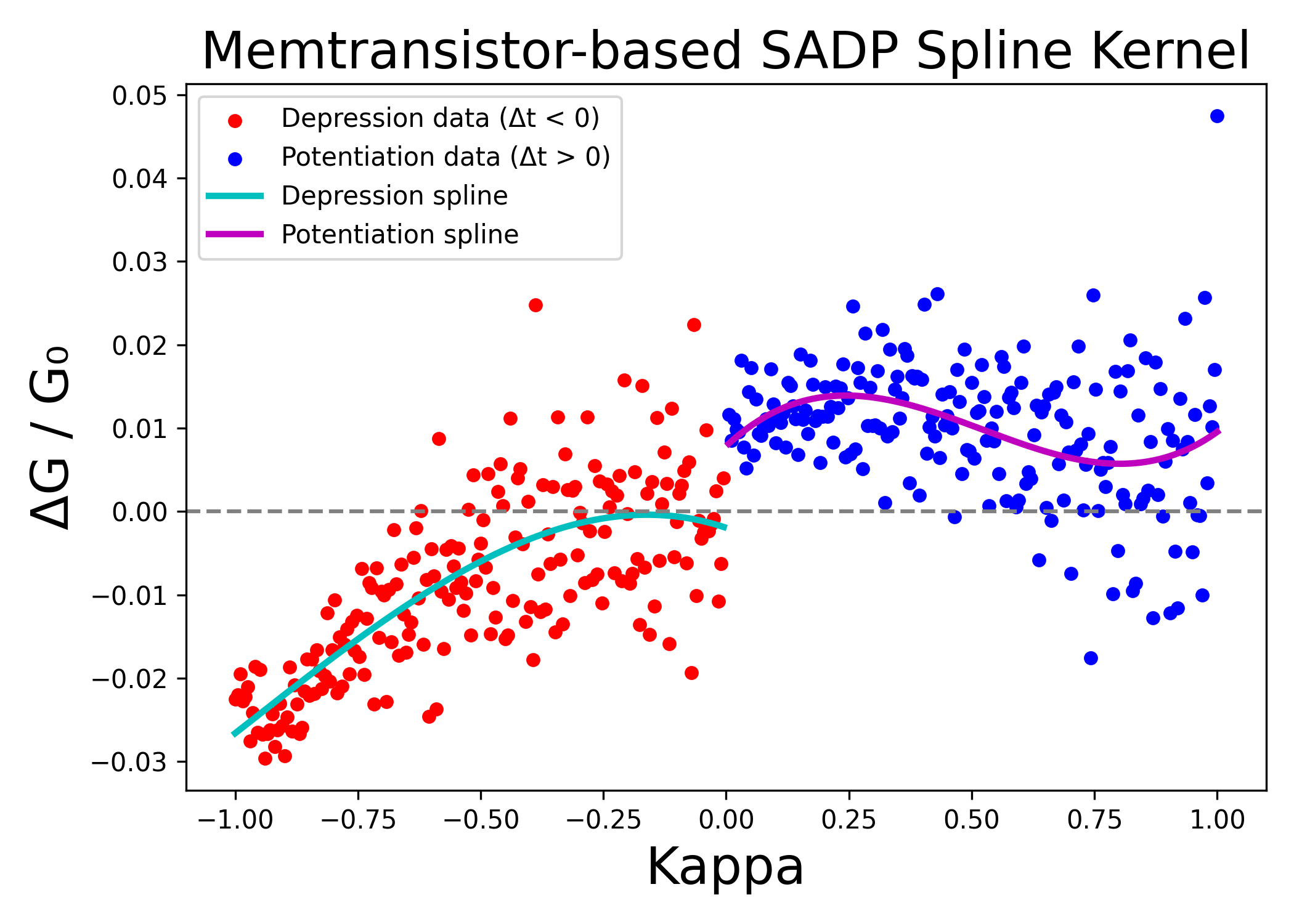}
        \label{fig:sadp_kernel}
    }

    \vspace{0.2cm} 

    \subfigure[Ideal STDP spline kernel in $\Delta t$ domain, showing exponential long-term potentiation (LTP) and depression (LTD) components.]{
        \includegraphics[width=0.48\textwidth]{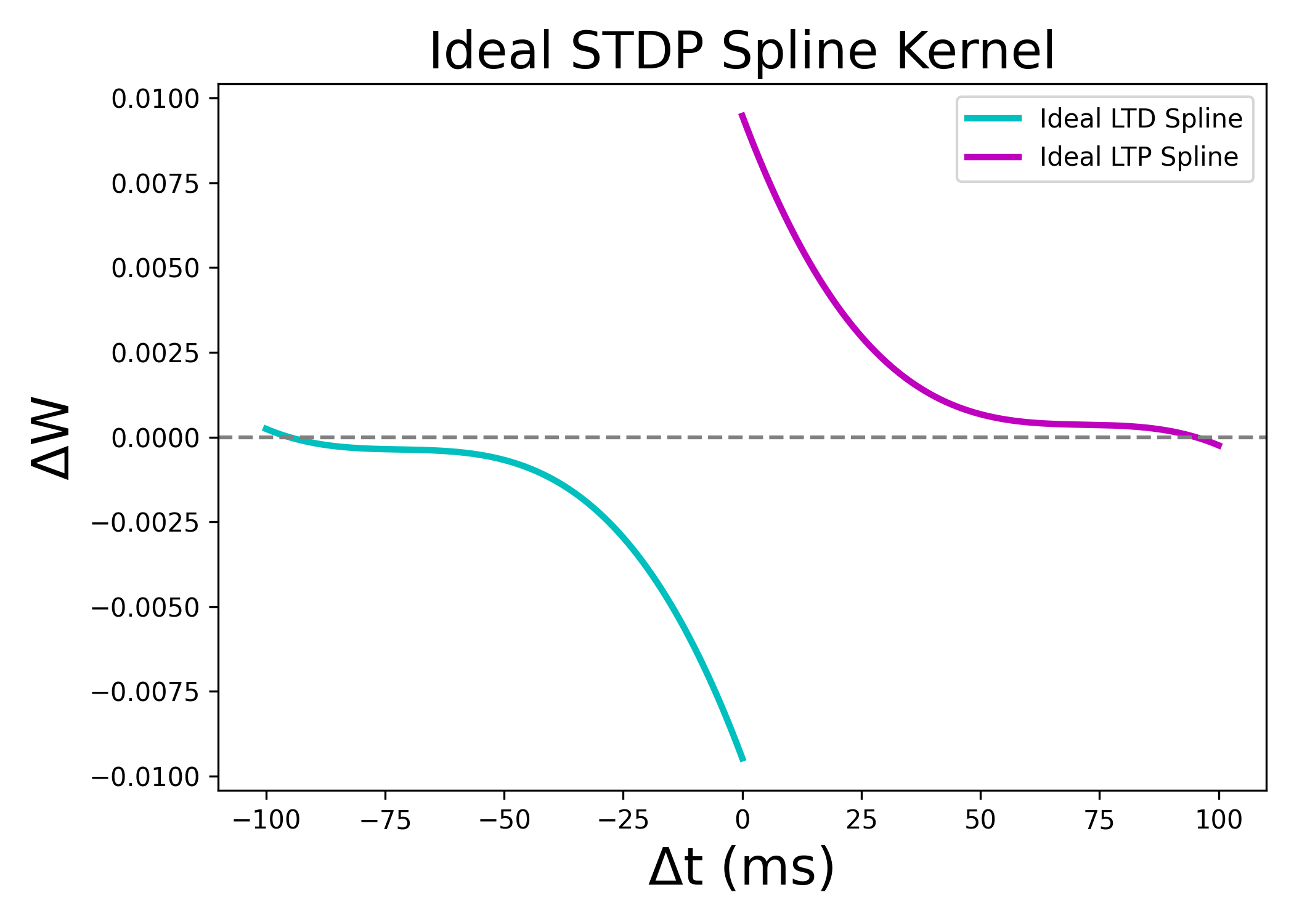}
        \label{fig:stdp_kernel_ideal}
    }
    \hfill
    \subfigure[Ideal SADP spline kernel in agreement ($\kappa$) domain, showing smooth symmetric potentiation and depression branches.]{
        \includegraphics[width=0.48\textwidth]{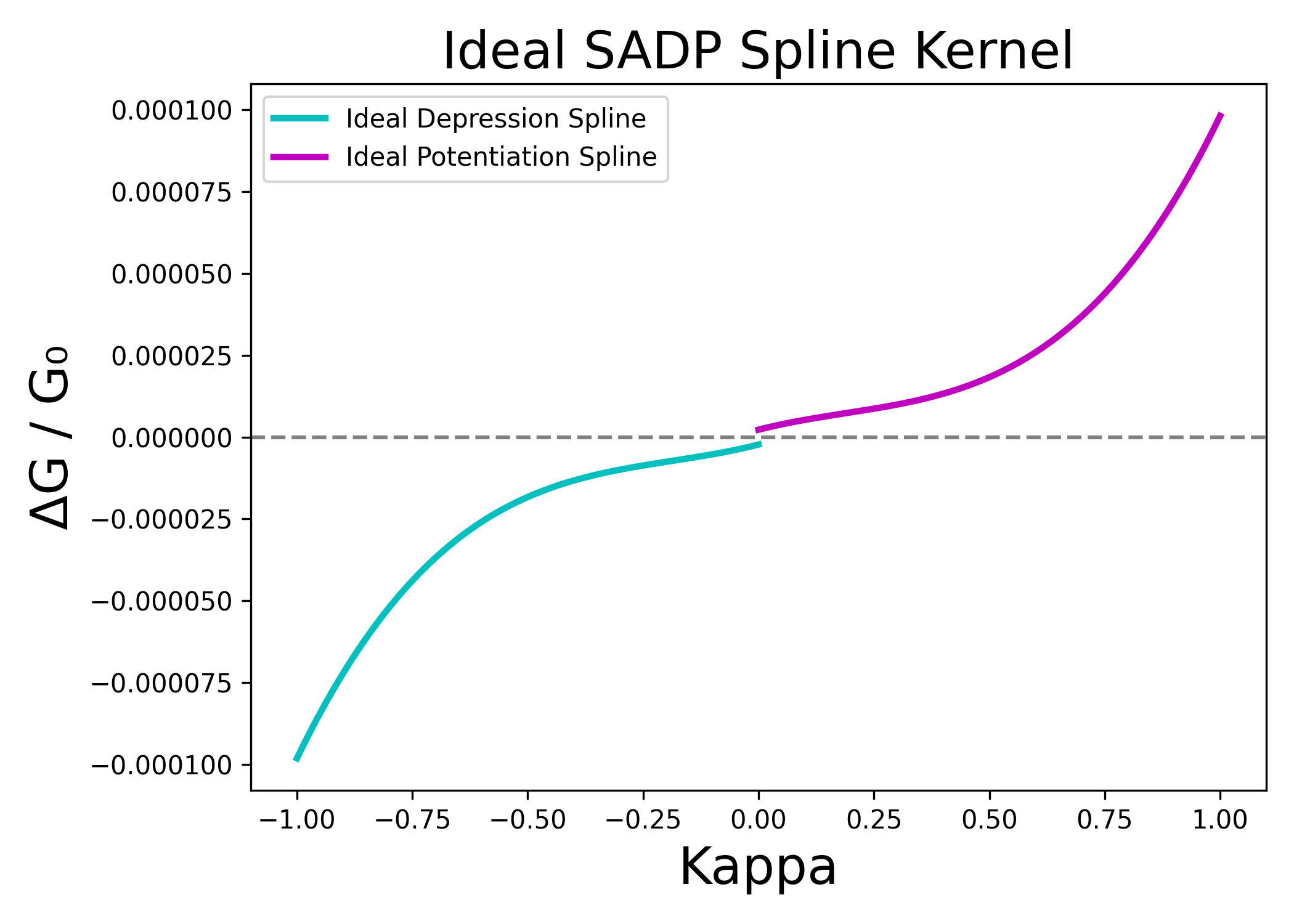}
        \label{fig:sadp_kernel_ideal}
    }

    \caption{\textbf{Memtransistor device characterization and learning kernels.} 
    (a) Electrical stimulation protocol and conductance modulation over write/erase cycles. 
    (b) STDP-based spline kernel derived from device measurements. 
    (c) SADP-based spline kernel obtained by remapping the STDP kernel to agreement coordinates. 
    (d) Ideal reference STDP kernel in $\Delta t$ domain. 
    (e) Ideal reference SADP kernel in $\kappa$ domain. 
    Together, these illustrate both device-calibrated and idealized kernel families for synaptic plasticity modeling.}
    \label{fig:memtransistor_kernels}
\end{figure}

\section{Benchmarking Experiments}
\label{sec:benchmarking}

To evaluate the performance and efficiency of our spike-based learning rules, we conducted a comprehensive benchmarking experiment involving multiple datasets, temporal encoding schemes, and network architectures. The primary aim of these experiments was to assess the trade-off between computational cost and representational power in \textbf{biologically inspired synaptic update mechanisms}, specifically comparing our proposed SADP variants (Spline (Device Specific and Analytical) and Linear) with classical spike-timing-dependent plasticity (STDP) and Hebbian learning rules. For all configurations, we measured classification accuracy, macro-averaged F1 score, and total end-to-end runtime, encompassing both unsupervised feature learning and downstream classifier training.

\textbf{Dataset and Encoding.} All models were evaluated on the MNIST and Fashion-MNIST (FMNIST) datasets, two standard pattern recognition datasets that are widely used to test Bio-Inspired pattern recognition models. The input images were first normalized to the $[0,1]$ range. Each image was then encoded into a temporal spike train using two biologically plausible strategies. In \textit{rate coding}, spikes were generated independently at each time step with probability proportional to pixel intensity, such that higher intensity pixels fired more frequently over a 10-step time window. In contrast, \textit{time-to-first-spike} (TTFS) coding encoded each pixel as a single spike at a latency inversely related to intensity; brighter pixels fired earlier, while darker pixels fired later or not at all. Label spike trains were generated to mirror the input encoding: multi-spike outputs for rate coding and single earliest spikes for TTFS.

\textbf{SADP Variants used for benchmarking.} 
Two spike-based learning rules were implemented for synaptic updates. 
The \textbf{Linear SADP} rule computed weight changes as a linear function of the $\kappa$-agreement score, which quantifies temporal similarity between pre- and post-synaptic spike trains. The update was scaled by fixed learning rates for potentiation and depression, applied over a 10-step window. 
The \textbf{Spline SADP} variant extended this approach by using spline-fitted synaptic dynamics derived from experimental iontronic memtransistor measurements. These splines, fitted to normalized conductance change curves ($\Delta G / G_0$), were used to translate $\kappa$ scores into biologically grounded weight updates, preserving device-specific nonlinearities in the potentiation/depression behavior. 

In addition, we considered an \textbf{Ideal (analytical) Spline SADP} kernel, designed as a synthetic reference model. 
Unlike the device-specific spline, the ideal kernel was generated from smooth exponential-like potentiation and depression curves, symmetrically parameterized by decay constants and amplitudes. 
The purpose of this variant was to evaluate SADP performance under an analytically well-behaved update rule that is not constrained by device non-idealities, providing a useful “upper bound” baseline for spline-based dynamics.

We evaluated the SADP framework using a variety of configurations. Models varied in kernel type (\texttt{linear} or \texttt{spline}), temporal encoding (\texttt{rate} or \texttt{TTFS}), and feature dimensionality (either \texttt{1layer} with 400 features or \texttt{1layer small} with 64 features learnt in the output layer of the neural network). All models were trained using unsupervised SADP updates for 10 epochs, and a downstream shallow vanilla neural network classifier was trained for 50 epochs on the extracted features.

\textbf{Training Pipeline.} Each model was trained using a fixed 10 epochs of unsupervised SADP learning with a batch size of 64 to approximate online plasticity. We performed a grid search across all combinations of SADP rule (linear or spline), temporal encoding (rate or TTFS), and network size. Two architectures were considered: a full model learning 400 features through SADP, and a reduced model learning only 64 hidden features with SADP. After SADP training, a standard feedforward classifier with one hidden layer (256 neurons) followed by a softmax output was trained on the extracted spike features using categorical cross-entropy loss for 50 supervised epochs. Evaluation was done on the test set, and the total runtime included both SADP learning and classifier training phases. This experimental protocol was designed to evaluate the computational efficiency of the proposed learning rules without significantly compromising performance.

\textbf{Classical STDP Baseline:} 
As a baseline for the feature extraction process, we implemented a pair-based STDP model using the \texttt{PostPre} rule in BindsNET~\cite{hazan2018bindsnet}, closely following the formulation of Bi and Poo~\cite{bi1998synaptic}. In this update scheme, synapses are potentiated when a presynaptic spike precedes a postsynaptic spike and depressed otherwise, capturing the hallmark temporally asymmetric Hebbian window. The update magnitudes were set to $A_+ = 10^{-4}$ and $A_- = 10^{-4}$, values chosen to balance stability and plasticity across the 10 training epochs. Synaptic weights were initialized uniformly in $[0,0.3]$ and constrained to remain within $[0,1]$.  

The network architecture consisted of 784 input neurons (corresponding to image pixels) fully connected to 400 leaky integrate-and-fire (LIF) neurons with membrane traces enabled. Inputs were converted into 10-timestep Poisson spike trains proportional to pixel intensity, providing a simple but effective rate-based encoding. During each presentation, hidden-layer spikes were accumulated over the 10 timesteps, producing a 400-dimensional feature vector. These features were subsequently passed to a downstream feedforward classifier with structure 400–256–128–10, trained with Adam optimizer on categorical cross-entropy.  

We selected classical STDP as a benchmark because it is biologically plausible, efficient, and event-driven, requiring no global signals or memory traces, which makes it attractive for neuromorphic substrates~\cite{davies2018loihi}. Its simplicity has enabled robust unsupervised learning in digit recognition tasks, particularly when paired with lateral inhibition and threshold adaptation~\cite{diehl2015unsupervised}. Several extensions of the basic model exist: trace-based STDP variants~\cite{morrison2008phenomenological, stimberg2019brian2} accelerate simulation by approximating spike timing with exponentially decaying traces; triplet-based rules~\cite{pfister2006triplets} capture higher-order temporal correlations that improve selectivity; and voltage-based rules~\cite{clopath2010voltage} incorporate postsynaptic depolarization for more biologically realistic dynamics. While we restrict ourselves here to the standard pair-based formulation for comparability, these variants highlight the richness of STDP as a modeling framework.  

\textbf{Hebbian Baseline:} 
As a complementary benchmark, we implemented a rate-based Hebbian learning rule in BindsNET, embodying the principle that ``cells that fire together wire together''~\cite{hebb1949organization}. Here, synaptic weights are strengthened when pre- and postsynaptic neurons are co-active, independent of precise spike timing. In our experiments $\eta = 10^{-3}$ is the learning rate, and $\lambda = 10^{-2}$ is a weight decay coefficient. This Oja-style term~\cite{oja1982simplified} prevents unbounded growth and introduces implicit competition among synapses. Updates were applied online after each sample presentation.  

The network setup paralleled the STDP baseline: 784 input neurons connected to 400 LIF neurons with weights initialized in $[0,0.3]$ and clipped to $[0,1]$, 10-timestep Poisson input coding, and the same downstream classifier.  

Hebbian learning is biologically plausible, local, and computationally lightweight. It has historically been central to models of cortical map formation and early visual development~\cite{linsker1986local, miller1994role}. In contrast to STDP, Hebbian plasticity emphasizes spike coincidence rather than timing, making it naturally better aligned with rate-coded or slowly varying inputs. It also forms the foundation of several unsupervised SNN models~\cite{diehl2015unsupervised}, particularly when combined with normalization or competitive mechanisms.

\textbf{Evaluation Metrics.} Each model configuration—across SADP, STDP and Hebbian—was evaluated using three standard metrics. Classification accuracy on the test set was used to assess predictive performance. The F1 score, computed as the harmonic mean of precision and recall and macro-averaged over all classes, was also included. Finally, runtime was recorded as the total wall-clock time from the beginning of unsupervised training to the end of supervised classifier training.

This benchmarking framework thus enables a systematic and fair comparison of biologically plausible synaptic learning rules, emphasizing both representational quality and computational efficiency. The experiments were designed to assess whether SADP-based learning can significantly reduce training time—particularly in hardware-relevant low-power settings—without substantially compromising performance.

\section{Results}

Figure \ref{fig:mnist_fmnist} illustrates the evolution of weight norms during SADP training and the validation accuracy of downstream classifiers for MNIST and FMNIST datasets. As expected, rate-coded models produced smoother convergence and higher accuracy compared to TTFS-coded models, with weight norms flattening over epochs, indicating empirical convergence of parameters. The comparison across SADP variants, together with STDP and Hebbian baselines, reveals several key trends.

On MNIST, the best performance was achieved by the rate-coded model trained with the ideal (analytical) spline SADP kernel (Fig.~\ref{fig:sadp_kernel_ideal}) on the full 400-feature network, which reached 
91.06\% accuracy with an F1 score of 0.909. This slightly surpassed the \textbf{linear SADP kernel} on the full network (90.68 \%) and clearly outperformed the \textbf{device-derived spline SADP kernel} (88.70 \%). Notably, these results show that SADP can provide both a robust ideal reference (through the synthetic kernel) and a pathway toward hardware-specific learning (through the device-derived kernel).

Among smaller classifiers with only 64 learned features, the rate-coded linear SADP model reached 79.48 \% accuracy, while the ideal spline and device-derived spline variants achieved 76.03 \% and 70.43 \%, respectively. While smaller models reduce per-epoch runtime drastically (137–148 s versus 720–760 s for full models), they come at a notable cost in predictive performance. Under TTFS coding, kernel choice was decisive: the ideal spline SADP kernel maintained strong performance (88.86 \% full, 71.71 \% small), whereas both linear and device-derived spline kernels collapsed (53–54 \% for full models and below 35 \% for small models). This highlights that bounded, smooth ideal kernels confer robustness under temporally sparse encodings where simpler or device-derived kernels struggle.

On FMNIST, the same overall pattern emerged, though accuracies were lower due to dataset complexity. The best-performing model was again the rate-coded full network trained with the ideal spline SADP kernel, which achieved \textbf{78.73 \%} accuracy, closely followed by the linear SADP kernel (78.45 \%). The device-derived spline kernel lagged behind at 72.32 \%. For the smaller networks, accuracies were 70.0 \% (linear), 69.42 \% (ideal spline), and 59.2 \% (device-derived spline). TTFS once again posed challenges: the linear SADP model trained with TTFS coding on the full network reached only 33.63 \%, and the device-derived spline variant dropped further to 21.04 \%. In contrast, the ideal spline SADP with TTFS preserved strong performance (78.26 \% full, 68.58 \% small), nearly matching its rate-coded counterpart. This underlines the stabilizing role of smooth ideal kernels, which preserve learning ability even when spike redundancy is minimal.

Runtime analysis highlights another advantage of SADP. Small classifiers required only 135–150 s per epoch, while full-size models took 720–770 s. STDP baselines, in comparison, were dramatically slower at over 2500 s per epoch, yet only achieved 12.9 \% accuracy on MNIST and 10.7 \% on FMNIST. Hebbian learning was faster (405–457 s/epoch) but failed to extract useful features, plateauing near 11 \% accuracy. Thus, SADP offers a unique balance of efficiency and representational quality: its runtime is only moderately higher than Hebbian, but its accuracy is vastly superior; compared with STDP, it is both faster and more accurate.

Comparing SADP variants reveals distinct strengths. The linear SADP kernel is highly competitive under rate coding, nearly matching the ideal spline kernel and exceeding the device-derived spline kernel, suggesting that simple agreement-driven updates suffice when spike trains are dense and redundant. However, in TTFS settings, the linear kernel collapses, reflecting its inability to cope with sparse timing information. The device-derived spline kernel underperforms relative to both linear and ideal variants, particularly under TTFS, which suggests that raw device nonlinearities can hinder generalization unless carefully regularized. By contrast, the ideal spline kernel consistently achieves the highest or near-highest performance across both coding schemes and datasets, showing that ideally well-behaved update rules provide robustness where other variants fail. Taken together, these results demonstrate that while the linear SADP kernel is efficient and strong under dense encodings, the ideal spline SADP kernel is the most versatile, and the device-derived spline kernel is valuable for aligning learning directly with neuromorphic hardware characteristics.

\begin{figure}[htbp!]
    \centering
    \begin{minipage}[b]{\textwidth}
        \centering
        \includegraphics[width=\textwidth]{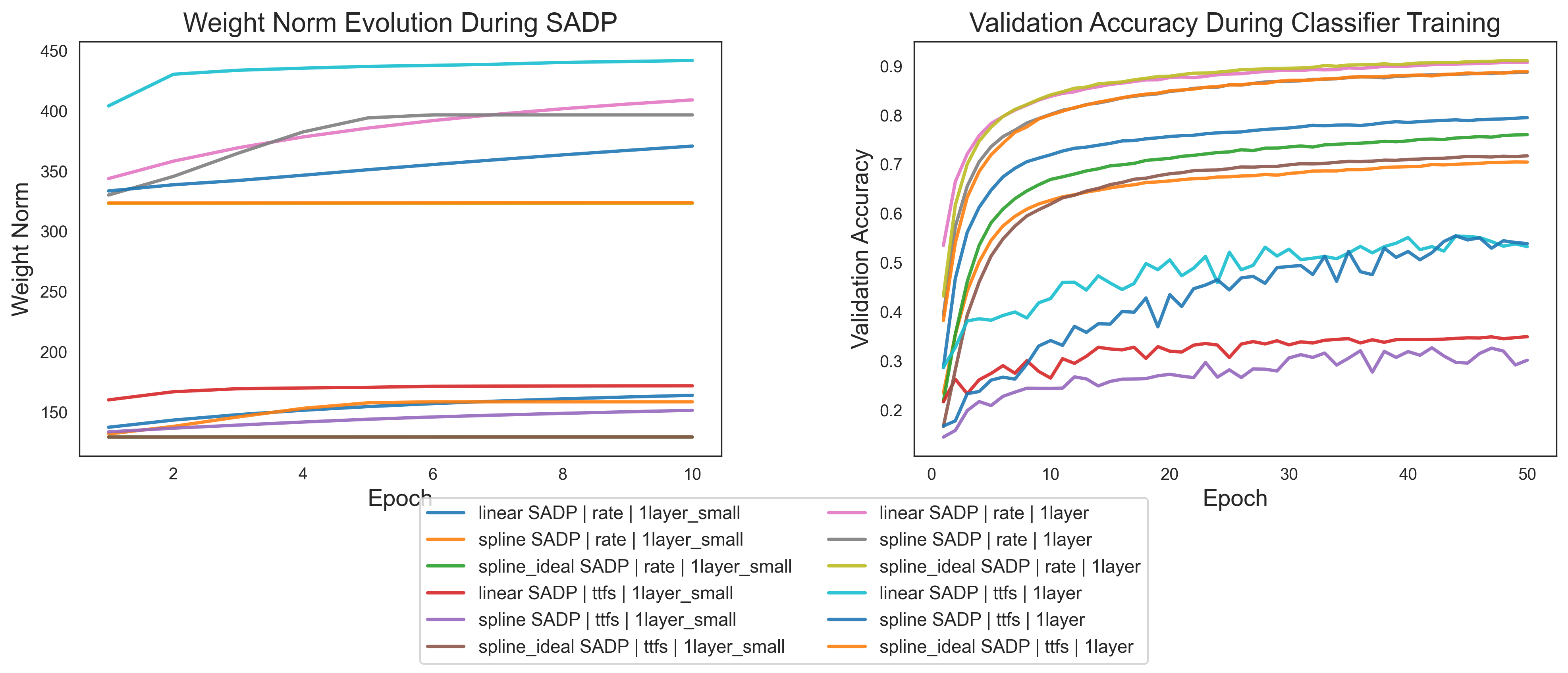}
        \caption*{\textbf{Figure \ref{fig:mnist_fmnist} (Top)}: MNIST results — weight norm (left) over SADP epochs and validation accuracy over classifier training epochs (right).}
    \end{minipage}
    \vspace{1em}

    \begin{minipage}[b]{\textwidth}
        \centering
        \includegraphics[width=\textwidth]{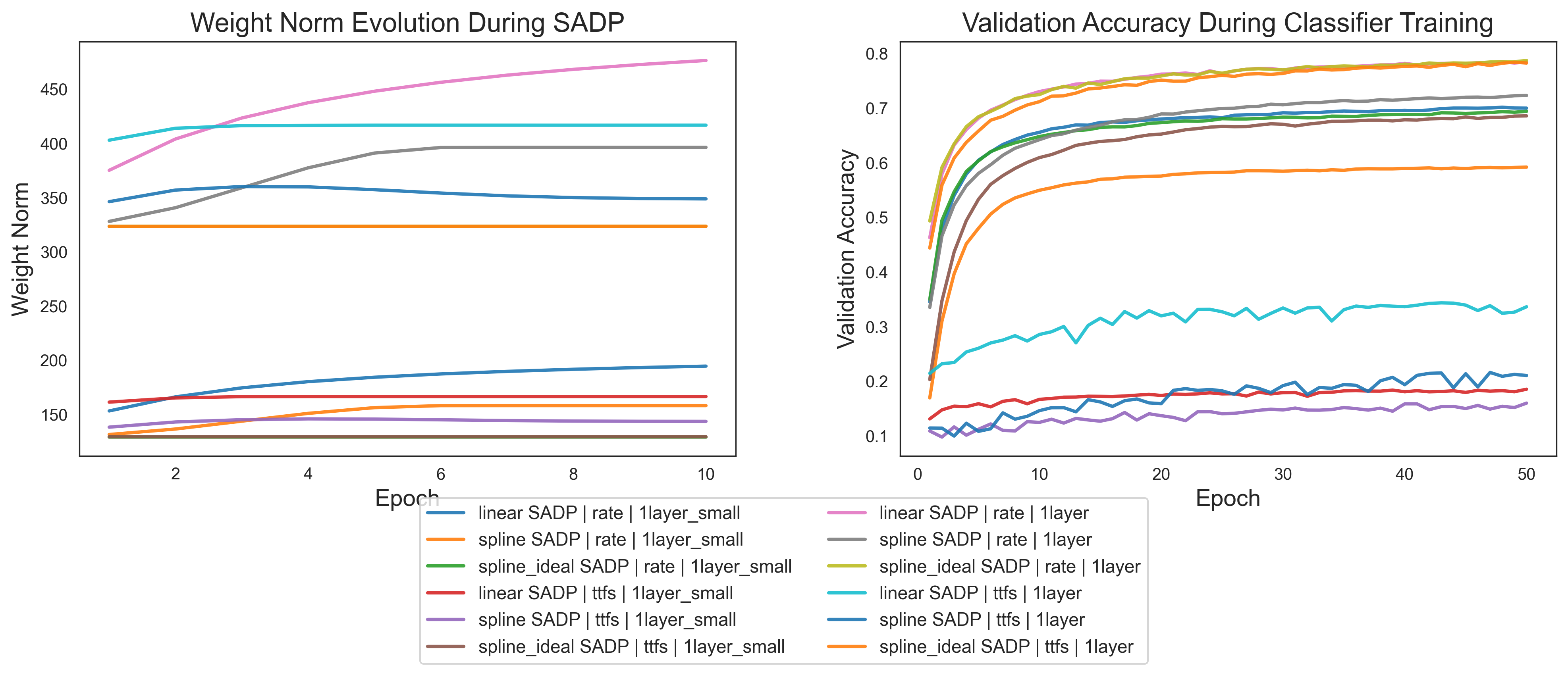}
        \caption*{\textbf{Figure \ref{fig:mnist_fmnist} (Bottom)}: FMNIST results — weight norm (left) over SADP epochs and validation accuracy over classifier training epochs (right).}
    \end{minipage}
    \vspace{2em}
    \caption{\textbf{Training dynamics and classifier performance on MNIST and FMNIST.} Both datasets show the evolution of weight norms during SADP training (10 epochs) and classifier validation accuracy (50 epochs). Models differ by kernel, coding scheme, and classifier size.}
    \label{fig:mnist_fmnist}
\end{figure}

\begin{table}[h]
    \centering
    \caption{MNIST performance summary: accuracy, F1 score, and average runtime per SADP epoch (seconds).}
    \label{tab:mnist}
    \begin{adjustbox}{width=0.95\textwidth}
    \begin{tabular}{lccc}
        \hline
        \textbf{Model Configuration} & \textbf{Validation Accuracy} & \textbf{F1 Score} & \textbf{Runtime / Epoch (s)} \\
        \hline
        \texttt{linear SADP | rate | 1layer}       & 0.9068 & 0.9055 & 720.89 \\
        \texttt{spline SADP | rate | 1layer}       & 0.8870 & 0.8857 & 754.45 \\
        \texttt{spline\_ideal SADP | rate | 1layer} & \textbf{0.9106} & \textbf{0.9094} & 758.83 \\
        \texttt{linear SADP | ttfs | 1layer}       & 0.5325 & 0.4989 & 724.10 \\
        \texttt{spline SADP | ttfs | 1layer}       & 0.5386 & 0.5224 & 750.00 \\
        \texttt{spline\_ideal SADP | ttfs | 1layer} & 0.8886 & 0.8866 & 756.53 \\
        \texttt{linear SADP | rate | 1layer\_small} & 0.7948 & 0.7919 & \textbf{137.32} \\
        \texttt{spline SADP | rate | 1layer\_small} & 0.7043 & 0.6989 & 137.86 \\
        \texttt{spline\_ideal SADP | rate | 1layer\_small} & 0.7603 & 0.7552 & 142.67 \\
        \texttt{linear SADP | ttfs | 1layer\_small} & 0.3493 & 0.3039 & 138.60 \\
        \texttt{spline SADP | ttfs | 1layer\_small} & 0.3014 & 0.2431 & 144.62 \\
        \texttt{spline\_ideal SADP | ttfs | 1layer\_small} & 0.7171 & 0.7112 & 148.63 \\
        \texttt{STDP | rate | 1layer}              & 0.1294 & 0.0518 & 2611.38 \\
        \texttt{Hebbian | rate | 1layer}           & 0.1135 & 0.0204 & 405.93 \\
        \hline
    \end{tabular}
    \end{adjustbox}
\end{table}

\begin{table}[h]
    \centering
    \caption{FMNIST performance summary: accuracy, F1 score, and average runtime per SADP epoch (seconds).}
    \label{tab:fmnist}
    \begin{adjustbox}{width=0.95\textwidth}
    \begin{tabular}{lccc}
        \hline
        \textbf{Model Configuration} & \textbf{Validation Accuracy} & \textbf{F1 Score} & \textbf{Runtime / Epoch (s)} \\
        \hline
        \texttt{linear SADP | rate | 1layer}       & 0.7845 & 0.7825 & 726.83 \\
        \texttt{spline SADP | rate | 1layer}       & 0.7232 & 0.7212 & 756.12 \\
        \texttt{spline\_ideal SADP | rate | 1layer} & \textbf{0.7873} & \textbf{0.7855} & 764.52 \\
        \texttt{linear SADP | ttfs | 1layer}       & 0.3363 & 0.3196 & 724.27 \\
        \texttt{spline SADP | ttfs | 1layer}       & 0.2104 & 0.2052 & 754.57 \\
        \texttt{spline\_ideal SADP | ttfs | 1layer} & 0.7826 & 0.7775 & 766.59 \\
        \texttt{linear SADP | rate | 1layer\_small} & 0.7000 & 0.6982 & \textbf{139.01} \\
        \texttt{spline SADP | rate | 1layer\_small} & 0.5920 & 0.5855 & 144.55 \\
        \texttt{spline\_ideal SADP | rate | 1layer\_small} & 0.6942 & 0.6919 & 146.55 \\
        \texttt{linear SADP | ttfs | 1layer\_small} & 0.1853 & 0.1578 & 139.20 \\
        \texttt{spline SADP | ttfs | 1layer\_small} & 0.1598 & 0.1303 & 144.85 \\
        \texttt{spline\_ideal SADP | ttfs | 1layer\_small} & 0.6858 & 0.6797 & 146.32 \\
        \texttt{STDP | rate | 1layer}              & 0.1069 & 0.0320 & 2525.75 \\
        \texttt{Hebbian | ratw | 1layer}           & 0.1000 & 0.0182 & 404.38 \\
        \hline
    \end{tabular}
    \end{adjustbox}
\end{table}

\vspace{1em}

\begin{remark}[Disclaimer on STDP and Hebbian Baselines]
The results reported here should not be interpreted as evidence that classical STDP or Hebbian learning rules are incapable of learning meaningful features. Substantial prior work has demonstrated that both approaches can achieve high accuracy when given sufficient training time and appropriate architectural support. For example, STDP combined with mechanisms such as winner-take-all inhibition has been used to learn hierarchical and convolutional representations for MNIST, CIFAR-10, and related benchmarks, often exceeding 90\% accuracy and, in some cases, surpassing 95\% \cite{diehl2015unsupervised,Kheradpisheh2018,Ferre2018}. These outcomes typically require extended training (100+ epochs), lateral inhibition, competitive layers, and high spike counts per input (e.g., 350–1000 neurons or timesteps).

In our benchmarks, STDP and Hebbian baselines were intentionally constrained to a lightweight regime of 10 training epochs and 10-timestep spike trains, to enable a fair, resource-matched comparison with SADP. Under these strict constraints, STDP achieved low accuracy and high runtime, while Hebbian learning, though faster, failed to extract sufficiently discriminative features. This performance gap is therefore a consequence of the experimental setting rather than a fundamental limitation of the learning rules.

Within this constrained framework, SADP shows clear advantages: it converges in fewer epochs, maintains competitive runtime efficiency, and learns task-relevant features under both rate and TTFS coding. We attribute these benefits to SADP's global, population-level update mechanism and its ability to incorporate hardware-calibrated, device-specific kernels.
\end{remark}

\section{Discussion} \label{sec:discussion}
\textbf{Effect of Coding and Network Size.}  
The results across MNIST and FMNIST clearly indicate that the choice of temporal coding scheme exerts a stronger influence on performance than the choice of SADP kernel.  

Rate-coded inputs consistently produced higher accuracy and smoother convergence than TTFS-coded inputs (e.g., MNIST: $91.06\%$ vs.\ $88.86\%$ for ideal spline SADP; FMNIST: $78.73\%$ vs.\ $78.26\%$ for ideal spline SADP).  
This shows that denser spike statistics generate more reliable agreement signals for unsupervised feature learning.  

In terms of network size, compact 64-feature models achieved reasonable accuracy while reducing runtime by nearly $5\times$ (137–150 s vs.\ 720–770 s per epoch).  
However, the smaller networks incurred a notable accuracy drop (e.g., MNIST: $91.06\%$ full vs.\ $76.03\%$ small for ideal spline SADP), underscoring a trade-off between efficiency and predictive power that is relevant for resource-constrained neuromorphic systems.

\textbf{Comparative Performance of Kernel Types.}  
A clear hierarchy emerged among SADP kernels. The \textbf{ideal (analytical) spline kernel} consistently delivered the best or near-best results across both datasets and coding schemes (e.g., MNIST: $91.06\%$ rate, $88.86\%$ TTFS; FMNIST: $78.73\%$ rate, $78.26\%$ TTFS).  

The \textit{linear kernel} was highly competitive under rate coding, nearly matching the ideal spline (e.g., MNIST: $90.68\%$; FMNIST: $78.45\%$), but it collapsed under TTFS (MNIST: $53.25\%$; FMNIST: $33.63\%$), reflecting its inability to handle temporally sparse inputs.  

The \textbf{device-derived spline kernel} consistently underperformed both ideal and linear variants (e.g., MNIST: $88.70\%$ rate, $53.86\%$ TTFS; FMNIST: $72.32\%$ rate, $21.04\%$ TTFS). The device-derived spline kernel represents an important step toward hardware-faithful learning.  While its performance currently lags behind the ideal and linear variants (e.g., MNIST: $88.70\%$ rate, $53.86\%$ TTFS; FMNIST: $72.32\%$ rate, $21.04\%$ TTFS), it demonstrates that SADP can directly learn from device-specific kernels measured on memtransistor hardware. The results further suggest a clear principle: the closer a device-derived kernel behaves to the smooth, bounded structure of the ideal spline, the stronger the generalization ability of SADP becomes. This indicates that continued advances in device design and calibration can translate directly into learning performance, making SADP a bridge between idealized theory and practical neuromorphic substrates.

Taken together, these findings highlight that simple linear kernels suffice for dense rate coding, ideal spline kernels provide robustness across both dense and sparse encodings, and device-derived kernels are promising for hardware alignment for general-purpose inference.

\textbf{Comparison with STDP and Hebbian Baselines.}  
Relative to classical rules, SADP showed clear advantages.  
STDP was dramatically slower (over 2500 s per epoch) yet achieved only $12.9\%$ accuracy on MNIST and $10.7\%$ on FMNIST under the constrained 10-epoch regime.  
Hebbian learning was faster (405–457 s/epoch) but plateaued near chance-level performance (11\% accuracy).  
By contrast, SADP reached over $91\%$ MNIST accuracy and nearly $79\%$ FMNIST accuracy within 10 epochs, with runtimes (720–770 s/epoch for full networks; 135–150 s/epoch for small) that were only moderately higher than Hebbian and far below STDP.  
Thus, SADP achieves a favorable balance: faster than STDP, much more accurate than Hebbian, and effective under low-epoch budgets.

\textbf{Limitations and Future Work.}  
These benchmarks intentionally restricted training to 10 epochs and modest feature counts for comparability.  
While this setup highlights SADP’s efficiency, it underestimates the potential of STDP and Hebbian learning, which are known to achieve high performance under longer training and richer architectures.  
SADP itself may also face challenges in deeper networks, where purely local unsupervised updates can dissipate task-relevant structure.  
Future work could combine SADP with top-down or reward-modulated signals (e.g., dopamine-based reinforcement, dendritic prediction, or predictive coding) to enable deeper task-aligned SNNs while preserving efficiency and hardware relevance.

\textbf{Advantages of SADP Over Classical Learning Rules.}  
SADP integrates the efficiency of Hebbian learning and the temporal sensitivity of STDP, while adding robustness and hardware adaptability:

\begin{itemize}[topsep=0pt,itemsep=0pt,parsep=0pt]
    \item \textbf{Fast convergence:} Achieves $>90\%$ MNIST accuracy in only 10 epochs, versus 100+ epochs typically needed by STDP.
    \item \textbf{Efficiency:} $O(T)$ per synapse complexity, compared to $O(S^2)$ spike-pair operations for STDP.
    \item \textbf{Temporal robustness:} Handles spike-train structure without requiring strict pre–post causality.
    \item \textbf{Hardware readiness:} Offers an \textit{ideal spline kernel} for analytical benchmarking and a \textit{device-derived spline kernel} fitted to memtransistor measurements, bridging theory and deployment.
\end{itemize}

Overall, SADP unifies \textit{efficiency}, \textit{temporal structure sensitivity}, and \textit{hardware adaptability}.  
The ideal spline kernel demonstrates broad robustness across coding regimes, the linear kernel provides a simple yet effective solution under dense coding, and the device-derived kernel connects learning directly to neuromorphic substrates, even if its performance currently lags.  
This positions SADP as a practical and versatile rule for next-generation neuromorphic systems.

\section{Conclusion} \label{sec:conclusion}

We introduced SADP, a family of synaptic update rules that use global spike train agreement metrics—such as Cohen's $\kappa$—to drive unsupervised learning in SNNs. Unlike classical STDP, which relies on precise spike-timing coincidences, SADP aggregates information across the entire spike trains. 

Empirical evaluations show that SADP enables competitive downstream classification across a range of kernel types, coding schemes, and network sizes. The choice of temporal coding—especially rate versus TTFS—emerged as a stronger determinant of performance than the kernel type. Compact SADP models with as few as 64 hidden units achieved high classification accuracy, demonstrating the method’s effectiveness for low-resource spiking representation learning. In comparison to both Hebbian and STDP baselines, SADP delivers superior representational quality with competitive runtime, validating its suitability for practical and scalable SNN training.

Importantly, SADP is not limited to idealized kernels: it can learn directly from device-derived kernels measured on neuromorphic hardware. Our results show that while device-specific kernels currently underperform relative to the ideal spline baseline, their effectiveness increases the more closely they approximate the ideal form. This establishes a positive pathway for hardware integration—advances in device calibration and design can translate directly into improved learning performance. Thus, SADP naturally aligns algorithmic progress with hardware development.

The framework’s reliance on binary spike train comparisons also enables hardware-friendly implementations using bitwise logic operations. All SADP variants trained significantly faster than classical STDP, with spline and linear kernels performing comparably under rate coding. These findings highlight SADP’s scalability and efficiency in practical neuromorphic settings.

Nonetheless, the present work focuses on shallow, single-layer networks. Extending SADP to deeper hierarchies remains an open challenge, as global agreement signals may attenuate across layers without supervision or coordination. Future directions include adaptive device-specific kernel learning, reward-modulated SADP variants, and attention-based agreement mechanisms to support deeper, more flexible learning architectures.

Overall, SADP provides a principled, efficient, and hardware-compatible alternative to traditional STDP or Hebbian learning for unsupervised learning in SNNs. By directly supporting both idealized and device-derived kernels, it offers a bridge between theory and neuromorphic hardware. The full source code for SADP, including all spline fitting routines, benchmarking experiments, and comparisons with STDP, is openly available at \href{https://github.com/Saptarshi-Bej/SADP}{GitHub}.

\section*{CRediT authorship contribution statement}
\section*{Author Contributions}
The conceptualization, theoretical formulation, experiments, manuscript writing and supervision were led by the corresponding author, Saptarshi Bej. Gouri Lakshmi and Harshit Kumar contributed to the benchmarking experiments. Muhammed Sahad E and Bikas C. Das were responsible for memtransistor data generation. 

\section*{Conflict of Interest}
The authors have no conflict of interest

\section*{Availability of code and results}
We make the benchmarking results available at \href{https://github.com/Saptarshi-Bej/SADP}{GitHub} along with the environment to facilitate maximum reproducibility.

\section*{Acknowledgment}

\bibliographystyle{elsarticle-num}
\bibliography{main}

\begin{thebibliography}{10}
\expandafter\ifx\csname url\endcsname\relax
  \def\url#1{\texttt{#1}}\fi
\expandafter\ifx\csname urlprefix\endcsname\relax\def\urlprefix{URL }\fi
\expandafter\ifx\csname href\endcsname\relax
  \def\href#1#2{#2} \def\path#1{#1}\fi

\bibitem{bi1998synaptic}
G.-q. Bi, M.-m. Poo, Synaptic modifications in cultured hippocampal neurons: dependence on spike timing, synaptic strength, and postsynaptic cell type, Journal of neuroscience 18~(24) (1998) 10464--10472.
\newblock \href {https://doi.org/10.1523/JNEUROSCI.18-24-10464.1998} {\path{doi:10.1523/JNEUROSCI.18-24-10464.1998}}.

\bibitem{markram1997regulation}
H.~Markram, J.~L{\"u}bke, M.~Frotscher, B.~Sakmann, Regulation of synaptic efficacy by coincidence of postsynaptic aps and epsps, Science 275~(5297) (1997) 213--215.
\newblock \href {https://doi.org/10.1126/science.275.5297.213} {\path{doi:10.1126/science.275.5297.213}}.

\bibitem{bi2001synaptic}
G.-Q. Bi, M.-M. Poo, Synaptic modification by correlated activity: Hebb's postulate revisited, Annual review of neuroscience 24~(1) (2001) 139--166.
\newblock \href {https://doi.org/10.1146/annurev.neuro.24.1.139} {\path{doi:10.1146/annurev.neuro.24.1.139}}.

\bibitem{sjostrom2001rate}
P.~J. Sj{\"o}str{\"o}m, G.~G. Turrigiano, S.~B. Nelson, Rate, timing, and cooperativity jointly determine cortical synaptic plasticity, Neuron 32~(6) (2001) 1149--1164.
\newblock \href {https://doi.org/10.1016/s0896-6273(01)00542-6} {\path{doi:10.1016/s0896-6273(01)00542-6}}.

\bibitem{MasquelierThorpe2007}
T.~Masquelier, S.~J. Thorpe, Unsupervised learning of visual features through spike timing dependent plasticity, PLoS Computational Biology 3~(2) (2007) e31.
\newblock \href {https://doi.org/10.1371/journal.pcbi.0030031} {\path{doi:10.1371/journal.pcbi.0030031}}.

\bibitem{Ferre2018}
P.~Ferr{\'e}, F.~Mamalet, S.~J. Thorpe, Unsupervised feature learning with winner-takes-all based stdp, Frontiers in Computational Neuroscience 12 (2018) 24.
\newblock \href {https://doi.org/10.3389/fncom.2018.00024} {\path{doi:10.3389/fncom.2018.00024}}.

\bibitem{Kheradpisheh2016}
S.~R. Kheradpisheh, M.~Ganjtabesh, T.~Masquelier, Bio-inspired unsupervised learning of visual features leads to robust invariant object recognition, Neurocomputing 205 (2016) 382--392.
\newblock \href {https://doi.org/10.1016/j.neucom.2016.04.029} {\path{doi:10.1016/j.neucom.2016.04.029}}.

\bibitem{Markram2011}
H.~Markram, W.~Gerstner, P.~J. Sj{\"o}str{\"o}m, A history of spike-timing-dependent plasticity, Frontiers in Synaptic Neuroscience 3 (2011) 4.
\newblock \href {https://doi.org/10.3389/fnsyn.2011.00004} {\path{doi:10.3389/fnsyn.2011.00004}}.

\bibitem{Markram2012}
H.~Markram, W.~Gerstner, P.~J. Sj{\"o}str{\"o}m, Spike-timing-dependent plasticity: A comprehensive overview, Frontiers in Synaptic Neuroscience 4 (2012) 2.
\newblock \href {https://doi.org/10.3389/fnsyn.2012.00002} {\path{doi:10.3389/fnsyn.2012.00002}}.

\bibitem{Tian2025}
Y.~Tian, A.~Kembay, N.~D. Truong, J.~K. Eshraghian, O.~Kavehei, Beyond pairwise plasticity: Group-level spike synchrony facilitates efficient learning in spiking neural networks, arXivPublished Apr 14 2025 (2025).

\bibitem{Subthreshold2025}
L.~A. Becker, F.~Baccelli, T.~Taillefumier, Subthreshold variability of neuronal populations driven by synchronous synaptic inputs, bioRxivPreprint (March 2025).
\newblock \href {https://doi.org/10.1101/2025.03.16.643547} {\path{doi:10.1101/2025.03.16.643547}}.

\bibitem{foncelle2018Modulation}
A.~Foncelle, A.~Mendes, et~al., Modulation of spike-timing dependent plasticity: Towards the inclusion of a third factor in computational models, Frontiers in Computational Neuroscience 12 (2018) 49.
\newblock \href {https://doi.org/10.3389/fncom.2018.00049} {\path{doi:10.3389/fncom.2018.00049}}.

\bibitem{clopath2010connectivity}
C.~Clopath, L.~B{\"u}sing, E.~Vasilaki, W.~Gerstner, Connectivity reflects coding: a model of voltage-based stdp with homeostasis, Nature neuroscience 13~(3) (2010) 344--352.
\newblock \href {https://doi.org/10.1038/nn.2479} {\path{doi:10.1038/nn.2479}}.

\bibitem{pfister2006triplets}
J.-P. Pfister, W.~Gerstner, Triplets of spikes in a model of spike timing-dependent plasticity, Journal of Neuroscience 26~(38) (2006) 9673--9682.
\newblock \href {https://doi.org/10.1523/JNEUROSCI.1425-06.2006} {\path{doi:10.1523/JNEUROSCI.1425-06.2006}}.

\bibitem{hebb1949organization}
D.~O. Hebb, The Organization of Behavior, Wiley, 1949.

\bibitem{oja1982simplified}
E.~Oja, A simplified neuron model as a principal component analyzer, Journal of mathematical biology 15~(3) (1982) 267--273.
\newblock \href {https://doi.org/10.1007/BF00275687} {\path{doi:10.1007/BF00275687}}.

\bibitem{bienenstock1982theory}
E.~L. Bienenstock, L.~N. Cooper, P.~W. Munro, Theory for the development of neuron selectivity: orientation specificity and binocular interaction in visual cortex, Journal of Neuroscience 2~(1) (1982) 32--48.
\newblock \href {https://doi.org/10.1523/JNEUROSCI.02-01-00032.1982} {\path{doi:10.1523/JNEUROSCI.02-01-00032.1982}}.

\bibitem{gerstner2014neuronal}
W.~Gerstner, W.~M. Kistler, R.~Naud, L.~Paninski, Neuronal Dynamics: From Single Neurons to Networks and Models of Cognition, Cambridge University Press, 2014.
\newblock \href {https://doi.org/10.1017/CBO9781107447615} {\path{doi:10.1017/CBO9781107447615}}.

\bibitem{Kheradpisheh2018}
S.~R. Kheradpisheh, M.~Ganjtabesh, S.~J. Thorpe, T.~Masquelier, Stdp-based spiking deep convolutional neural networks for object recognition, Neural Networks 99 (2018) 56--67.
\newblock \href {https://doi.org/10.1016/j.neunet.2017.12.005} {\path{doi:10.1016/j.neunet.2017.12.005}}.

\bibitem{Dong2018}
M.~Dong, X.~Huang, B.~Xu, Unsupervised speech recognition through spike-timing-dependent plasticity in a convolutional spiking neural network, PLoS ONE 13~(11) (2018) e0204596.
\newblock \href {https://doi.org/10.1371/journal.pone.0204596} {\path{doi:10.1371/journal.pone.0204596}}.

\bibitem{song2000competitive}
S.~Song, K.~D. Miller, L.~F. Abbott, Competitive hebbian learning through spike-timing-dependent synaptic plasticity, Nature Neuroscience 3~(9) (2000) 919--926.
\newblock \href {https://doi.org/10.1038/78829} {\path{doi:10.1038/78829}}.

\bibitem{yang2025causal}
X.~Yang, B.~Doiron, \href{https://arxiv.org/abs/2501.09296}{Causal spike timing dependent plasticity prevents assembly fusion in recurrent networks}, aRxiv (2025).
\newline\urlprefix\url{https://arxiv.org/abs/2501.09296}

\bibitem{Chawla1999}
D.~Chawla, E.~D. Lumer, K.~J. Friston, The relationship between synchronization among neuronal populations and their mean activity levels, Neural Computation 11~(6) (1999) 1389--1411.
\newblock \href {https://doi.org/10.1162/089976699300016287} {\path{doi:10.1162/089976699300016287}}.

\bibitem{Nessler2013}
B.~Nessler, M.~Pfeiffer, L.~Buesing, W.~Maass, Bayesian computation emerges in generic cortical microcircuits through spike-timing-dependent plasticity, PLoS Computational Biology 9~(4) (2013) e1003037.
\newblock \href {https://doi.org/10.1371/journal.pcbi.1003037} {\path{doi:10.1371/journal.pcbi.1003037}}.

\bibitem{Habenschuss2013}
S.~Habenschuss, H.~Puhr, W.~Maass, Emergence of optimal decoding of population codes through stdp, Neural Computation 25~(6) (2013) 1371--1407.
\newblock \href {https://doi.org/10.1162/NECO_a_00446} {\path{doi:10.1162/NECO_a_00446}}.

\bibitem{sagar2022emulation}
S.~Sagar, K.~Udaya~Mohanan, S.~Cho, et~al., \href{https://doi.org/10.1038/s41598-022-07505-9}{Emulation of synaptic functions with low voltage organic memtransistor for hardware oriented neuromorphic computing}, Scientific Reports 12~(1) (2022) 3808.
\newblock \href {https://doi.org/10.1038/s41598-022-07505-9} {\path{doi:10.1038/s41598-022-07505-9}}.
\newline\urlprefix\url{https://doi.org/10.1038/s41598-022-07505-9}

\bibitem{lee2013effective}
H.~J. Lee, J.~H. Hwang, K.~B. Choi, S.-G. Jung, K.~N. Kim, Y.~S. Shim, C.~H. Park, Y.~W. Park, B.-K. Ju, \href{https://doi.org/10.1021/am403044r}{Effective indium-doped zinc oxide buffer layer on silver nanowires for electrically highly stable, flexible, transparent, and conductive composite electrodes}, ACS Applied Materials \& Interfaces 5~(21) (2013) 10397--10403.
\newblock \href {https://doi.org/10.1021/am403044r} {\path{doi:10.1021/am403044r}}.
\newline\urlprefix\url{https://doi.org/10.1021/am403044r}

\bibitem{mukherjee2021superionic}
A.~Mukherjee, et~al., \href{https://doi.org/10.1063/5.0069478}{Superionic rubidium silver iodide gated low voltage synaptic transistor}, Applied Physics Letters 119~(25) (2021) 253502.
\newblock \href {https://doi.org/10.1063/5.0069478} {\path{doi:10.1063/5.0069478}}.
\newline\urlprefix\url{https://doi.org/10.1063/5.0069478}

\bibitem{sagar2019unconventional}
S.~Sagar, A.~Dey, B.~C. Das, \href{https://doi.org/10.1021/acsaelm.9b00522}{Unconventional redox-active gate dielectrics to fabricate high performance organic thin-film transistors}, ACS Applied Electronic Materials 1~(11) (2019) 2314--2324.
\newblock \href {https://doi.org/10.1021/acsaelm.9b00522} {\path{doi:10.1021/acsaelm.9b00522}}.
\newline\urlprefix\url{https://doi.org/10.1021/acsaelm.9b00522}

\bibitem{vandeBurgt2018organic}
Y.~van~de Burgt, A.~Melianas, S.~T. Keene, et~al., \href{https://doi.org/10.1038/s41928-018-0103-3}{Organic electronics for neuromorphic computing}, Nature Electronics 1~(7) (2018) 386--397.
\newblock \href {https://doi.org/10.1038/s41928-018-0103-3} {\path{doi:10.1038/s41928-018-0103-3}}.
\newline\urlprefix\url{https://doi.org/10.1038/s41928-018-0103-3}

\bibitem{hazan2018bindsnet}
H.~Hazan, D.~J. Saunders, H.~Khan, D.~Patel, D.~T. Sanghavi, H.~T. Siegelmann, R.~Kozma, Bindsnet: A machine learning-oriented spiking neural networks library in python, Frontiers in Neuroinformatics 12 (2018) 89.
\newblock \href {https://doi.org/10.3389/fninf.2018.00089} {\path{doi:10.3389/fninf.2018.00089}}.

\bibitem{davies2018loihi}
M.~Davies, N.~Srinivasa, T.-M. Lin, G.~Chinya, Y.~Cao, S.~Choday, G.~Deal, P.~Dimoulas, B.~Felt, A.~Fiebig, et~al., Loihi: A neuromorphic manycore processor with on-chip learning, IEEE Micro 38~(1) (2018) 82--99.
\newblock \href {https://doi.org/10.1109/MM.2018.112130359} {\path{doi:10.1109/MM.2018.112130359}}.

\bibitem{diehl2015unsupervised}
P.~U. Diehl, M.~Cook, Unsupervised learning of digit recognition using spike-timing-dependent plasticity, Frontiers in Computational Neuroscience 9 (2015) 99.
\newblock \href {https://doi.org/10.3389/fncom.2015.00099} {\path{doi:10.3389/fncom.2015.00099}}.

\bibitem{morrison2008phenomenological}
A.~Morrison, M.~Diesmann, W.~Gerstner, Phenomenological models of synaptic plasticity based on spike timing, Biological Cybernetics 98~(6) (2008) 459--478.
\newblock \href {https://doi.org/10.1007/s00422-008-0233-1} {\path{doi:10.1007/s00422-008-0233-1}}.

\bibitem{stimberg2019brian2}
M.~Stimberg, R.~Brette, D.~F. Goodman, Brian 2, an intuitive and efficient neural simulator, eLife 8 (2019) e47314.
\newblock \href {https://doi.org/10.7554/eLife.47314} {\path{doi:10.7554/eLife.47314}}.

\bibitem{clopath2010voltage}
C.~Clopath, W.~Gerstner, Voltage and spike timing interact in stdp – a unified model, Frontiers in Synaptic Neuroscience 2 (2010) 25.
\newblock \href {https://doi.org/10.3389/fnsyn.2010.00025} {\path{doi:10.3389/fnsyn.2010.00025}}.

\bibitem{linsker1986local}
R.~Linsker, From basic network principles to neural architecture: Emergence of orientation columns, Proceedings of the National Academy of Sciences 83~(21) (1986) 8779--8783.
\newblock \href {https://doi.org/10.1073/pnas.83.22.8779} {\path{doi:10.1073/pnas.83.22.8779}}.

\bibitem{miller1994role}
K.~D. Miller, D.~J. MacKay, The role of constraints in hebbian learning, Neural Computation 6~(1) (1994) 100--126.
\newblock \href {https://doi.org/10.1162/neco.1994.6.1.100} {\path{doi:10.1162/neco.1994.6.1.100}}.

\end{thebibliography}

\section*{Appendix A: Computational Efficiency of SADP}

Synaptic learning rules in spiking neural networks (SNNs) often involve significant computational cost, especially when tracking precise spike timing relationships. In this appendix, we compare the time complexity of classical pairwise STDP with that of SADP. We show that SADP offers substantial computational savings, particularly in high-firing regimes or neuromorphic hardware implementations.

\begin{theorem}[Computational Efficiency of SADP vs.\ STDP]
Consider an SNN with \( N_{\mathrm{pre}} \) pre-synaptic neurons and \( N_{\mathrm{post}} \) post-synaptic neurons, observed over \( T \) discrete time steps. Assume each pre- and post-synaptic neuron emits at most \( S \) spikes within this window. Then:

\begin{enumerate}
    \item The time complexity of classical pairwise STDP (in its direct implementation) is
    \[
    \mathcal{O}(N_{\mathrm{pre}} \cdot N_{\mathrm{post}} \cdot S^2),
    \]
    due to the need to compare all \( S \times S \) spike time pairs between each neuron pair.

    \item The time complexity of SADP, which computes an aggregate agreement statistic (e.g., Cohen's \( \kappa \)) over binary spike trains, is
    \[
    \mathcal{O}(N_{\mathrm{pre}} \cdot N_{\mathrm{post}} \cdot T),
    \]
    since spike trains are compared once per time step to compute agreement metrics.
\end{enumerate}

Hence, if \( S \gg \sqrt{T} \), SADP achieves strictly lower computational cost than pairwise STDP. Moreover, because agreement metrics operate on binary vectors, SADP admits highly optimized bitwise implementations suitable for parallel and neuromorphic hardware.
\end{theorem}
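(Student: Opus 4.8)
The plan is to prove this as a direct operation-counting argument: for each bound I would isolate the work done per synapse, i.e.\ per pair $(i,j)$ of pre- and post-synaptic neurons, and then multiply by the $N_{\mathrm{pre}} \cdot N_{\mathrm{post}}$ pairs.

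First I would establish the STDP bound. Fix neurons $i$ and $j$. In the direct, all-pairs implementation of the kernel in Equation~\ref{stdp_learning}, every emitted presynaptic spike time must be combined with every emitted postsynaptic spike time to form $\Delta t = t_{\mathrm{post}} - t_{\mathrm{pre}}$, evaluate $K_{\mathrm{STDP}}(\Delta t)$ (constant time, a single exponential), and accumulate it into $\Delta w_{ij}$. Since each neuron emits at most $S$ spikes, there are at most $S^2$ such pairs, giving $\mathcal{O}(S^2)$ per synapse and $\mathcal{O}(N_{\mathrm{pre}} N_{\mathrm{post}} S^2)$ in total. I would explicitly flag that this is the naive implementation and note in one sentence that trace-based approximations reduce this to $\mathcal{O}(T)$ or $\mathcal{O}(S)$ per synapse at the price of exactness; this caveat is the one place the statement needs care so that the quadratic bound is honest rather than a strawman.

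Next I would establish the SADP bound. For the same fixed pair $(i,j)$, computing $\kappa_{ij}$ requires a single pass over the $T$ time steps to accumulate the match count $\sum_t \mathbf{1}(\mathbf{X}_{b,i,t} = \mathbf{S}_{b,j,t})$ yielding $p_0$, together with the two marginal spike counts $\sum_t \mathbf{X}_{b,i,t}$ and $\sum_t \mathbf{S}_{b,j,t}$ from which $p_e$ follows by $\mathcal{O}(1)$ arithmetic, followed by $\mathcal{O}(1)$ work to form the ratio and apply the bounded learning function $\mathcal{L}$. This is $\mathcal{O}(T)$ per synapse, hence $\mathcal{O}(N_{\mathrm{pre}} N_{\mathrm{post}} T)$ overall; a batch of size $B$ multiplies both bounds identically and so is irrelevant to the comparison.

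Finally, dividing the two totals shows SADP is asymptotically strictly cheaper exactly when $T = o(S^2)$, i.e.\ $S = \omega(\sqrt{T})$, which is the claimed regime; since a discrete-time neuron with reset emits at most one spike per step we have $S \le T$, so the interesting window is $\sqrt{T} < S \le T$, where the speedup ranges from a $\sqrt{T}$-factor up to a $T$-factor. For the bitwise remark I would observe that the $T$-length inner loop operates purely on binary vectors: $p_0$ is the popcount of a bitwise XNOR of the two spike-train words and the marginals are popcounts of the individual words, so on a machine of word size $W$ the per-synapse cost is $\mathcal{O}(T/W)$ wall-clock, and the computation is trivially parallel across synapses — the structure neuromorphic crossbars exploit. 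I do not expect a real obstacle; the only point requiring judgment is pinning down the computational model for STDP, which I would address head-on.
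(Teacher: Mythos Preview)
Your proposal is correct and follows essentially the same approach as the paper's own proof: a per-synapse operation count (all $S\times S$ spike pairs for STDP versus a single $\mathcal{O}(T)$ pass for the $\kappa$ statistics), multiplied by the $N_{\mathrm{pre}}N_{\mathrm{post}}$ synapses, followed by the $S>\sqrt{T}$ crossover and the bitwise/popcount observation. Your additions --- the explicit trace-based caveat, the $S\le T$ constraint that bounds the speedup window, and the word-size factor $W$ --- are refinements the paper does not spell out, but the core argument is identical.
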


\begin{proof}
Let \( s^p_i(t), s^q_j(t) \in \{0,1\} \) denote the spike trains of pre-synaptic neuron \( i \) and post-synaptic neuron \( j \), respectively, over \( T \) time steps. Assume:
\[
\sum_{t=1}^T s^p_i(t) \leq S, \quad \sum_{t=1}^T s^q_j(t) \leq S.
\]

\textbf{(1) Classical STDP.}  
For each synapse \( (i, j) \), pairwise STDP compares all pairs of spike times between \( s^p_i(t) \) and \( s^q_j(t) \). Since each train has at most \( S \) spikes, this results in \( \mathcal{O}(S^2) \) comparisons per synapse. Across all \( N_{\mathrm{pre}} \times N_{\mathrm{post}} \) synapses, the total complexity is:
\[
\mathcal{O}(N_{\mathrm{pre}} \cdot N_{\mathrm{post}} \cdot S^2).
\]

\textbf{(2) SADP.}  
Instead of evaluating individual spike pairs, SADP computes an aggregate agreement metric (e.g., Cohen’s \( \kappa \)) over the entire spike train. This requires a single pass over \( T \) time steps per synapse, with bitwise comparisons and a fixed number of scalar operations (e.g., summing agreement counts and applying the \( \kappa \) formula). Thus, per synapse:
\[
\mathcal{O}(T) \quad \Rightarrow \quad \mathcal{O}(N_{\mathrm{pre}} \cdot N_{\mathrm{post}} \cdot T) \text{ overall}.
\]

\textbf{(3) Comparison.}  
SADP is more efficient when:
\[
T < S^2 \quad \Rightarrow \quad S > \sqrt{T}.
\]
This condition is commonly satisfied in high-firing regimes, where neurons spike multiple times in a short observation window.

\textbf{(4) Hardware Advantage.}  
Because spike trains are binary, agreement metrics like \( \kappa \) can be computed via fast bitwise operations (e.g., XOR, AND, popcount), allowing efficient parallelization on CPUs, GPUs, or neuromorphic cores. In contrast, STDP requires dynamic scheduling of spike pair comparisons, which is more memory- and compute-intensive.
\end{proof}

\begin{remark}
While optimized STDP implementations may restrict comparisons to a fixed temporal window (e.g., \(\Delta t \leq \tau\)), the worst-case quadratic bound still holds when spike counts are dense. SADP avoids this entirely by summarizing correlation via agreement, not explicit timing.
\end{remark}

\begin{remark}
SADP’s complexity scales linearly with the number of time steps, not the number of spikes. This makes it particularly effective for rate-coded or high-firing regimes, where traditional STDP would incur quadratic costs.
\end{remark}

\section*{Appendix B: Device Fabrication and Electrical Characterization}

Side-gated iontronic organic memtransistors were fabricated on 15~mm $\times$ 20~mm indium tin oxide (ITO)-coated glass substrates (sheet resistance: 4~$\Omega$/sq). Substrates were sequentially cleaned in deionized water, acetone, ethanol, and Milli-Q water under ultrasonic agitation, followed by hot-air drying. 

Surface hydrophobicity was induced by spin-coating hexamethyldisilazane (HMDS) at 4500~rpm for 60~s, followed by annealing at 96$^\circ$C for 60~s. A positive photoresist (S1813) was spin-coated at 4500~rpm for 60~s and soft-baked at 110$^\circ$C for 1~min. The source (S), drain (D), and gate (G) electrodes were lithographically patterned using a maskless photolithography tool (DMO MicroWriter), developed in 0.26~M TMAOH for 17~s, rinsed, and dried. ITO was subsequently etched in 9~M HCl for 16~min. Optical inspection and conductivity testing confirmed complete and defect-free electrode definition.

A second lithography step defined the channel window (length: 10~$\mu$m, width: 300~$\mu$m, gate–channel spacing: 100~$\mu$m). Following HMDS treatment, poly(3-hexylthiophene) (P3HT) was spin-coated at 1500~rpm for 60~s, annealed at 120$^\circ$C for 20~min, and cooled to room temperature. Finally, a solid redox electrolyte composed of polyethylene oxide and ethyl viologen diperchlorate (PEO:EV(ClO$_4$)$_2$) was drop-cast onto the gate–channel junction and vacuum-dried to crystallize the film, completing the device structure shown in Fig.~\ref{fig:oect_characterization}a.

\subsection*{Electrical Characterization}

Prior to electrolyte deposition, two-terminal source–drain sweeps ($V_{\mathrm{SD}} \in [-1,1]$~V) across the patterned P3HT channel exhibited linear, hysteresis-free conduction (Fig.~\ref{fig:oect_characterization}c), confirming its intrinsic semiconducting behavior. Following electrolyte integration, transfer characteristics ($V_{\mathrm{G}} \in [+3, -3]$~V, $V_{\mathrm{D}}=-0.5$~V) revealed a reproducible hysteresis loop (Fig.~\ref{fig:oect_characterization}d), consistent with enhancement-mode p-channel operation. The suppressed gate current ($I_{\mathrm{G}}$) originates from redox coupling between the P3HT channel and the EV(ClO$_4$)$_2$ electrolyte, highlighting stable ionic gating with minimal leakage.

\subsection*{Synaptic Functionality}

To probe neuromorphic applicability, excitatory (potentiating) and inhibitory (depressing) voltage pulse trains were applied at the gate. Specifically, 1000 consecutive excitatory pulses ($V_{\mathrm{G}}=-3.0$~V, 50~ms) and inhibitory pulses ($V_{\mathrm{G}}=+1.0$~V, 50~ms) were delivered, with a read operation ($V_{\mathrm{D}}=-0.5$~V, 50~ms) after each pulse. The resulting conductance modulation (Fig.~\ref{fig:memtransistor_data}) shows gradual, symmetric, and nearly linear potentiation/depression, a desirable feature for analog weight updates in neuromorphic computing.

\subsection*{Kernel Extraction for SADP}

The experimentally measured potentiation and depression trajectories were fitted with smooth spline functions to yield bounded, device-specific update kernels. These kernels were directly integrated into the Spike Agreement–Dependent Plasticity (SADP) framework used in the main text, ensuring that simulated learning dynamics accurately reflect the measured conductance evolution of the fabricated devices.

\end{document}